\theoremstyle{plain}
\declaretheorem[numberwithin=section]{theorem}
\declaretheorem[sibling=theorem]{proposition}
\declaretheorem[sibling=theorem]{lemma}
\declaretheorem[sibling=theorem]{corollary}
\theoremstyle{definition}
\declaretheorem[sibling=theorem]{definition}
\theoremstyle{remark}
\renewcommand{\Pr}{\operatorname*{\mathbb{P}}}
\newcommand*{\mypara}[1]{\leavevmode{\normalfont\normalsize\bfseries #1.}}
\newcommand*{\euler}{\ensuremath{\mathsf{e}}}
\newcommand*{\reals}{\mathbb{R}}
\newcommand*{\naturals}{\mathbb{N}}
\renewcommand{\vec}{\bm}
\newcommand*{\scrD}{\mathcal{D}}
\newcommand*{\scrN}{\mathcal{N}}
\newcommand*{\scrO}{\mathcal{O}}
\newcommand*{\scrP}{\mathcal{P}}
\newcommand*{\scrR}{\mathcal{R}}
\newcommand*{\scrS}{\mathcal{S}}
\newcommand*{\scrT}{\mathcal{T}}
\newcommand*{\scrQ}{\mathcal{Q}}
\newcommand*{\scrX}{\mathcal{X}}
\newcommand*{\scrY}{\ensuremath{\mathcal{Y}}}
\newcommand*{\scrZ}{\ensuremath{\mathcal{Z}}}
\newcommand*{\mech}{\ensuremath{{\mathsf{M}}}}
\newcommand*{\analyst}{\ensuremath{{\mathsf{A}}}}
\newcommand*{\ind}[1]{\mathbf{1}_{\left[#1\right]}}
\newcommand*{\concat}{{\mathbin{+\mkern-10mu+}}}
\newcommand*{\blank}{{{}\cdot{}}}
\newcommand*{\renyi}[3]{\operatorname{D}_{#1} \mathopen{} \left( #2 \middle\| #3 \right)}
\newcommand*{\privloss}[2]{\operatorname{PrivLoss} \mathopen{} \left( #1 \middle\| #2 \right)}
\NewDocumentCommand{\irange}{mg}{\ensuremath{\llbracket #1 \IfNoValueF{#2}{, #2} \rrbracket}}
\DeclareMathOperator{\sinteract}{\ensuremath{\mathsf{I}}}
\DeclareMathOperator{\replace}{rep}
\DeclareMathOperator{\clamp}{clip}
\DeclareMathOperator*{\E}{\mathbb{E}}
\DeclareMathOperator{\unif}{\mathcal{U}}
\DeclareMathOperator{\normal}{\mathcal{N}}
\DeclarePairedDelimiter{\abs}{\lvert}{\rvert}
\DeclarePairedDelimiter{\floor}{\lfloor}{\rfloor}
\DeclareMathOperator{\erfc}{\operatorname{erfc}}
\DeclareMathOperator*{\argmax}{arg\,max}
\DeclareMathOperator*{\argmin}{arg\,min}
\title{Adaptive Data Analysis for Growing Data}
\author{%
    Neil G.\ Marchant%
    \\
  School of Computing \& Information Systems\\
  University of Melbourne, Australia \\
  \texttt{nmarchant@unimelb.edu.au} \\
  \And
  Benjamin I.\ P.\ Rubinstein \\
  School of Computing \& Information Systems\\
  University of Melbourne, Australia \\
  \texttt{brubinstein@unimelb.edu.au} \\
}
\begin{document}

\maketitle

\begin{abstract}
    Reuse of data in adaptive workflows poses challenges regarding overfitting and the statistical validity of results. 
    Previous work has demonstrated that interacting with data via differentially private algorithms can mitigate 
    overfitting, achieving worst-case generalization guarantees with asymptotically optimal data requirements.
    However, such past work assumes data is \emph{static} and cannot accommodate situations where data \emph{grows} over time. 
    In this paper we address this gap, presenting the first generalization bounds for adaptive analysis on dynamic data. 
    We allow the analyst to adaptively \emph{schedule} their queries conditioned on the current size of the data, in 
    addition to previous queries and responses. 
    We also incorporate time-varying empirical accuracy bounds and mechanisms, allowing for tighter 
    guarantees as data accumulates. 
    In a batched query setting, the asymptotic data requirements of our bound grows with the square-root of the number 
    of adaptive queries%
    , matching prior works' improvement over data splitting for the static setting. 
    We instantiate our bound for statistical queries with the clipped Gaussian mechanism, where it empirically 
    outperforms baselines composed from static bounds. 
\end{abstract}

\section{Introduction} \label{sec:intro}

The ubiquity of adaptive workflows in modern data science has raised concerns about the risk of overfitting and the 
validity of findings~\citep{ioannidis2005why,gelman2014statistical}. %
In such adaptive workflows, data is reused over multiple steps, where the procedure or analysis at any given step may 
depend on results of previous steps. 
Common examples include hyperparameter tuning or model selection on a hold-out set~\citep{reunanen2003overfitting,
rao2008dangers,dwork2015generalization}, 
blending of exploratory and confirmatory data analysis~\citep{zhao2017controlling,fife2021understanding}, and the 
reuse of benchmark\slash public datasets within a research community~\citep{koch2021reduced}. 
While adaptivity can enable more exploratory analysis, it is not covered by conventional guarantees of generalization 
and statistical validity, which assume the analysis is selected independently of the data~\citep{dwork2015preserving}.

A simple approach for enabling adaptive data analysis with generalization guarantees is to collect a fresh 
dataset %
whenever a step in the analysis depends on existing data. 
This can also be achieved by randomly splitting a dataset and using a separate split for each step.
However, the data requirements of this approach may be prohibitive, scaling \emph{linearly} in the number of  
adaptive steps. 
A line of work based on algorithmic stability~\citep{dwork2015preserving,bassily2016algorithmic,
feldman2018calibrating,shenfeld2019necessary,fish2020sampling,rogers2020guaranteed,jung2020new,dinur2023differential,
blanc2023subsampling,shenfeld2023generalization} offers a significant improvement over data splitting, with data 
requirements that grow asymptotically with the \emph{square-root} of the number of adaptive steps. 
The core result of this line of work is a \emph{transfer theorem}, which guarantees that the outputs of an adaptive 
analysis are close to the expected outputs on the data distribution if (i)~the analysis is stable under small 
changes to the dataset and (ii)~the outputs are close to the empirical average on the dataset. 
\emph{Differential privacy}~\citep{dwork2006calibrating} is commonly adopted as a  notion of stability in this work, 
achieving the square-root dependence mentioned above, which is asymptotically optimal in the worst 
case~\citep{hardt2014preventing,steinke2015interactive}.

Most prior work on adaptive data analysis via algorithmic stability assumes a common setting, where the data is 
sampled i.i.d.\ from an unknown distribution and used by a mechanism to estimate an analyst's adaptive 
queries~\citep{dwork2015preserving,bassily2016algorithmic,jung2020new}. 
Generalization bounds are then obtained for a worst-case data distribution and a worst-case analyst, who is actively 
trying to overfit.
More recently, variations of this setting have been studied in an attempt to better reflect how data is used in 
practice~\citep{zrnic2019natural,rogers2020guaranteed,kontorovich2022adaptive,shenfeld2023generalization}. 
This includes replacing the assumption of i.i.d.\ data with weakly correlated data~\citep{kontorovich2022adaptive}, 
or replacing a worst-case analyst by a dynamic model~\citep{zrnic2019natural}.
Concurrent work has also made progress on the former, providing generalization guarantees for correlated data by 
constraining the analyst to a class of ``concentrated'' queries~\citep{rapoport2025tight}.

A limitation of all prior work is the assumption that data is collected before the analysis begins, 
and remains \emph{static} thereafter. 
However, it is common in practice for data to \emph{grow} over time, and it may be undesirable to wait for all data to 
arrive or to ignore data that arrives after an analysis has begun. 
This \emph{growing data} setting has been studied in the differential privacy literature, both for adaptive 
queries~\citep{cummings2018differential,qiu2022differential} and for updating a fixed query whenever a 
dataset changes~\citep{dwork2010differential,chan2011private,henzinger2023almost,fichtenberger2023constant}.
In this paper, we bridge the gap, obtaining the first generalization guarantees for adaptive data analysis (ADA) in 
the growing setting. 
We consider a fully adaptive analyst, who can determine not only the content of their queries, but also the timing and 
frequency of their submissions on-the-fly. 
This schedule can be conditioned on the current size of the data, as well as all past queries and responses.

To tackle the growing data setting, we introduce definitions and techniques that extend beyond existing ADA frameworks. 
A key innovation is our approach to bounding query error, which, following \citet{jung2020new}, incorporates a term 
comparing the query results evaluated on a posterior data distribution and the true data distribution. 
Our insight is that for the growing data setting, the posterior distribution must be marginalized over unseen future 
data at the time a query is submitted—a crucial departure from the static setting where the full dataset is known in 
advance.
This yields a transfer theorem that depends on a corresponding variant of \emph{posterior stability}. 
This dynamic nature of the posterior, whose support grows in a way that depends on the analyst's adaptive schedule, 
requires significant new analytical ideas to prove the conversion from differential privacy to posterior 
stability, which are instrumental in obtaining DP-based transfer theorems. 
This results in an additional factor in the error bound (compared to the static case for linear 
queries) proportional to the percentage increase in the dataset size.

We propose a non-uniform generalization of $(\epsilon, \delta)$-differential privacy where the $\delta$ parameter 
varies for each data point\slash time step, inspired by personalized privacy~\citep{jorgensen2015conservative}.
This permits us to obtain tighter generalization guarantees---the error bound increases as a function of the average 
$\delta$ over all time steps, rather than the maximum $\delta$ under the standard DP definition. 
These theoretical advances culminate in new bounds for various query types, including statistical queries, 
low-sensitivity queries, and low-sensitivity minimization queries using non-uniform differential privacy as a 
stability measure. 

As a concrete application of our guarantees we consider using the clipped Gaussian mechanism to answer adaptive 
statistical queries. 
To ensure tight privacy accounting when the number of queries at each time step is chosen adaptively, we leverage a 
privacy filter~\citep{whitehouse2023fully} which supports fully adaptive composition. 
Our bound empirically outperforms baselines composed from bounds for static data. 
In a batched query setting, the asymptotic data requirements of our bound grow with the square-root of the number 
of adaptive queries for a fixed accuracy goal (assuming the ratio of final to initial data size is held constant). 
This improvement matches the improvement of bounds for static data~\citep{jung2020new} over the data splitting 
baseline.

\section{Preliminaries} \zlabel{sec:background}

We introduce notation used throughout the paper. 
The sequence of integers from $n_1$ to $n_2$ inclusive is denoted by $\irange{n_1, n_2}$, or $\irange{n_2}$ 
when $n_1 = 1$. 
Given a sequence $\vec{x}$, we refer to the $t$-th element as $x_t$ and the length as $\abs{\vec{x}}$. 
We use $\vec{x}_{\irange{t_1,t_2}}$ to denote the subsequence of $\vec{x}$ containing elements from 
index $t_1$ to $t_2$ inclusive, or $\vec{x}_{\irange{t_2}}$ when $t_1 = 1$. 
We use capital letters for random variables and lower case letters for realizations of a random variable. 
The uniform distribution over a set $\scrS$ is denoted $\unif(\scrS)$ and the normal distribution with mean $\mu$ and 
standard deviation $\sigma$ is denoted $\normal(\mu, \sigma^2)$. 
The product distribution of~$n$ i.i.d.\ random variables drawn from $\scrD$ is denoted $\scrD^n$.

\begin{figure}[t]
    \centering
    \includegraphics[width=\linewidth]{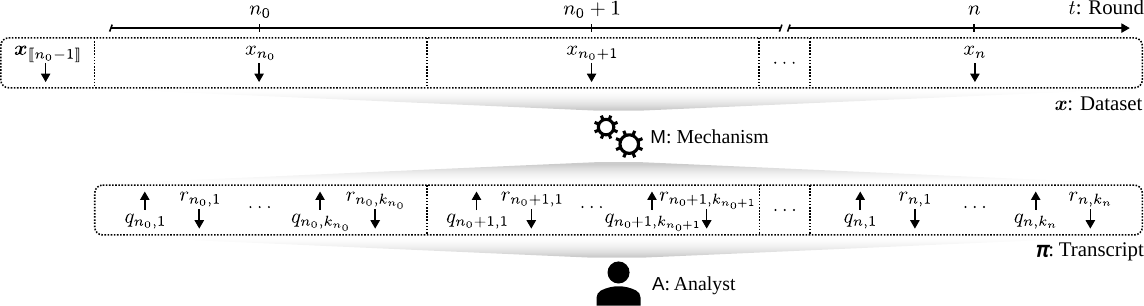}
    \caption{Schematic of our new setting for adaptive data analysis on growing data. 
    The dataset is of size $n_0$ when the analysis begins, and grows by one data point in each round. 
    The analyst asks queries adaptively in each round based on past responses, and receives a response from the 
    mechanism before selecting the next query. 
    The framework reduces to the static data setting when $n = n_0$.
    }
    \label{fig:schematic}
\end{figure}

\subsection{Formulating Adaptive Data Analysis (ADA) for Growing Data} \zlabel{sec:formulate}
We propose a new formulation of adaptive data analysis for growing data that builds on prior work 
for static data~\citep{dwork2015preserving,bassily2016algorithmic,jung2020new}. 

\mypara{Dataset} 
Let $\scrP$ be an unknown data distribution over a finite domain $\scrX$. 
We consider a growing dataset $\vec{X} = (X_1, X_2, \ldots)$, where each data point $X_t$ is drawn i.i.d.\ from 
$\scrP$, and the data points are indexed in order of arrival. 
We define the \emph{snapshot} of $\vec{X}$ at index $t$, to be the portion of the data realized by index $t$, namely 
$\vec{X}_{\irange{t}}$.
We study datasets over a fixed horizon $n$ so that $\abs{\vec{X}} = n$.

\mypara{Analyst and mechanism} 
We consider an analyst~$\analyst$ who would like to estimate queries about the data distribution~$\scrP$ 
\emph{online} using the growing dataset~$\vec{X}$. 
The analyst asks queries from a fixed query class~$\scrQ$, such as the class of \emph{statistical queries} 
(see \zcref{sec:queries}).
The analyst is prohibited from accessing~$\vec{X}$ directly, but can instead submit queries to an online 
mechanism~$\mech$ that returns estimates using the current snapshot of~$\vec{X}$. 
We assume $\mech$ produces \emph{feasible estimates}, meaning estimates are guaranteed to fall within the range 
of the query. 
This can be achieved by design~\citep{hu2024privacy} or by modifying $\mech$ to project infeasible estimates onto the 
range of the query. 

\begin{wrapfigure}{r}{0.5\textwidth}
  \vspace{-24pt}
  \begin{minipage}{\linewidth}
    \begin{algorithm}[H]
    \caption{%
    Interaction between $\analyst$ and $\mech$}
    \zlabel{alg:interaction}
    \small
    \algrenewcommand\algorithmicindent{1.0em}%
    \begin{algorithmic}[1]
      \State Wait for $\mech$ to receive data $\vec{X}_{\irange{n_0 - 1}}$ 
      \label{alg-line:init-data-pts}
      \State Initialize empty transcript $\Pi$ \label{alg-line:init-transcript}
      \For{Round $t \in \irange{n_0}{n}$} \label{alg-line:start-for-loop}
        \State Wait for $\mech$ to receive next data point $X_t$ \label{alg-line:next-data-pt}
        \While{Analyst not finished} \label{alg-line:exit-condition}
          \State Generate query: $Q \sim \analyst(\Pi)$
          \State Estimate query response: $R \sim \mech(Q; \vec{X}_{\irange{t}})$ 
          \State Append $(Q, R)$ to $\Pi_t$ in-place \label{alg-line:update-transcript}
        \EndWhile 
      \EndFor \label{alg-line:end-for-loop}
      \State \Return Transcript $\Pi$
    \end{algorithmic}
    \end{algorithm}
  \end{minipage}
\end{wrapfigure}

\mypara{Interaction} 
\zcref{alg:interaction} specifies how the analyst interacts with the mechanism.
To begin, the analyst selects an initial dataset size $n_0 \leq n$ and waits for the mechanism to receive~$n_0$ data 
points (lines~\ref{alg-line:init-data-pts} and~\ref{alg-line:next-data-pt}).
The analyst then submits queries to the mechanism over multiple rounds, where each round is marked by the receipt 
of a new data point (lines~\ref{alg-line:start-for-loop}--\ref{alg-line:end-for-loop}). 
The rounds are indexed starting at~$n_0$ so that index~$t$ coincides with the size of the growing dataset. 
The number of queries asked in a given round $t$ is determined adaptively by the analyst 
(line~\ref{alg-line:exit-condition}). 
Since the analyst adaptively controls when each round terminates, they can force the mechanism to use a stale 
snapshot of the dataset $\vec{X}_{\irange{t}}$ even if 
newer data points (with indices $>t$) have arrived and are waiting to be ingested by the mechanism. 

\mypara{Transcript}
The interaction yields a transcript $\Pi$ of the queries submitted in each round and the estimates produced by the 
mechanism (lines~\ref{alg-line:init-transcript} and~\ref{alg-line:update-transcript}). 
The transcript is structured as a sequence of sequences $\Pi = (\Pi_1, \Pi_2, \ldots, \Pi_n)$ where 
$\Pi_t = ((Q_{t,1}, R_{t,1}), \ldots, (Q_{t,k_t}, R_{t,k_t}))$ records the query-estimate pairs from round $t$ 
in the order they were submitted. 
We denote the space of possible transcripts by 
$\scrT = \bigcup_{\vec{k} \in S(n, k)} \prod_{t = 1}^{n} (\scrQ \times \scrR)^{k_t}$, 
where $\scrQ$ is the query class, $\scrR$ is the range of the queries and 
$S(n, k) = \{ \vec{k} \in \irange{k}^n : (\forall t < n_0)(k_t = 0) \wedge \sum_{t = 1}^{n} k_t = k \}$ 
is the set of possible allocations of $k$ queries across $n$ rounds.\footnote{%
  In the definition of $S(n, k)$, the first statement in the predicate accounts for the fact that the analyst does 
  not begin submitting queries until round~$n_0$.
} 

Looking ahead, we will be interested in the stability of the transcript under perturbations to the dataset. 
It is therefore convenient to interpret the interaction in \zcref{alg:interaction} as a random map 
$\sinteract(\vec{X}; \analyst, \mech)$ that takes a growing dataset $\vec{X} \in \scrX^n$ as input and returns a 
transcript $\Pi \in \scrT$ as output. 
We view $\analyst$ and $\mech$ as parameters of $\sinteract$, and drop the dependence on them where it is clear 
from context.

\subsection{Query Classes} \zlabel{sec:queries}
Following prior work~\citep{bassily2016algorithmic,jung2020new}, we consider three query classes. 
We use $q(\vec{x}_{\irange{t}})$ to denote the result of a query $q \in \scrQ$ evaluated on a 
snapshot $\vec{x}_{\irange{t}}$
and $q(\scrD)$ to denote the result evaluated on a data distribution $\scrD$. 

\noindent\textbf{Low-sensitivity queries} 
are defined by a $\vec{\Delta}$-sensitive function $q: \scrX^* \to [0, 1]$ that maps a data 
snapshot to a scalar on the unit interval.
We say $q$ is $\vec{\Delta}$-sensitive given $\vec{\Delta} = (\Delta_1, \ldots, \Delta_n) \in \reals_+^n$, if for all 
$t \in \irange{n}$ 
we have $\abs{q(\vec{x}_{\irange{t}}) - q(\tilde{\vec{x}}_{\irange{t}})} \leq \Delta_t$ for any pair of neighboring 
snapshots $\vec{x}_{\irange{t}}, \tilde{\vec{x}}_{\irange{t}} \in \scrX^t$ that differ on one data point. 
The result of the query when evaluated on a data distribution $\scrD$ is 
$q(\scrD) \coloneqq \E_{\vec{X} \sim \scrD} [q(\vec{X})]$.

\noindent\textbf{Statistical queries} 
are a subset of $\vec{\Delta}$-sensitive queries where each query is of the form 
$q(\vec{x}_{\irange{t}}) = \sum_{\tau = 1}^{t} \tilde{q}(x_\tau) / t$ for some function $\tilde{q}: \scrX \to [0, 1]$. 
Since each query is fully specified by $\tilde{q}$, we refer to $\tilde{q}$ as $q$ when there is no ambiguity.
The sensitivity satisfies $\Delta_t \leq 1/t$ for all $t \in \irange{n}$.

\noindent\textbf{Minimization queries}
are solutions to parameter optimization problems defined by a data-dependent loss function. 
Due to space constraints, we discuss them in \zcref{app:gen-stability-min-queries}.

\subsection{Generalization and Stability} \zlabel{sec:gen-stability}

\zcref{alg:interaction} may fail to generalize if the mechanism leaks detailed information 
about the dataset that is exploited by the analyst when selecting queries. 
The degree of leakage is related to the stability of the interaction under perturbations to the dataset. 
Roughly speaking, a more stable interaction leaks less information and is more likely to generalize.
In \zcref{sec:theory}, we will derive generalization guarantees that depend on the stability of the interaction. 
In preparation, we now define how generalization and stability will be measured.
For clarity of exposition, we focus on low-sensitivity and statistical queries here, and extend to 
minimization queries in \zcref{app:gen-stability-min-queries}.

Consider the mechanism's response $R$ to query $Q$ in round $t$, for which the ``true'' answer to the query is the 
expected value on the data distribution, denoted $Q(\scrP^t)$. 
We measure generalization of $R$ in terms of the absolute difference $\abs{R - Q(\scrP^t)}$, which we
refer to as the \emph{distributional error}. 
Our generalization guarantee for the analysis as a whole, takes the form of a high probability bound on the worst-case 
distributional error that holds jointly over all rounds, as defined below.
Note that we consider bounds on the error $\alpha_t$ that vary as a function of the round index $t$, which permits the 
bound to improve as the dataset grows.
\begin{definition} \zlabel{def:distributional-acc}
  Let $\alpha_t \geq 0$ for all $t \in \irange{n_0, n}$ and $\beta \geq 0$. 
  A mechanism $\mech$ is $(\{\alpha_t\}, \beta)$-\emph{distributionally accurate} if with probability $1 - \beta$ 
  over the randomness in the dataset $\vec{X} \sim \scrP^n$ and transcript 
  $\Pi \sim \sinteract(\vec{X}; \analyst, \mech)$, the largest distributional error in the $t$-th 
  round satisfies $\max_{(Q, R) \in \Pi_t} \abs{R - Q(\scrP^t)} \leq \alpha_t$, and this holds jointly for all 
  $t \in \irange{n_0, n}$, for any analyst $\analyst$ and any data distribution $\scrP$.
\end{definition}

When deriving distributional accuracy bounds in the next section, we make use of a related accuracy bound that 
compares the mechanism's responses to raw empirical estimates evaluated on the current data snapshot. 
Consider again the mechanism's response $R$ to query $Q$ in round $t$, where the raw estimate using snapshot 
$\vec{X}_{\irange{t}}$ is denoted $Q(\vec{X}_{\irange{t}})$. 
We define the \emph{snapshot error} of $R$ to be the absolute difference $\abs{R - Q(\vec{X}_{\irange{t}})}$.\footnote{
  $R$ and $Q(\vec{X}_{\irange{t}})$ do not generally coincide, since the mechanism may inject noise in its estimates.
}
By analogy with \zcref{def:distributional-acc}, we then define the following accuracy bound using snapshot error.
\begin{definition} \zlabel{def:snapshot-acc}
  Let $\alpha_t \geq 0$ for all $t \in \irange{n_0, n}$ and $\beta \geq 0$. 
  A mechanism $\mech$ is $(\{\alpha_t\}, \beta)$-\emph{snapshot accurate}\footnote{
    \citet{cummings2018differential} adopt a similar definition of accuracy for a DP mechanism
    operating on a growing dataset. 
    However their definition assumes a non-adaptive analyst and holds for a worst-case dataset, whereas ours holds for 
    a worst-case adaptive analyst assuming the growing dataset is drawn from $\scrP^n$.
  } if with probability $1 - \beta$ over the randomness in the dataset $\vec{X} \sim \scrP^n$ and transcript 
  $\Pi \sim \sinteract(\vec{X}; \analyst, \mech)$, the largest snapshot error in the $t$-th round satisfies 
  $\max_{(Q, R) \in \Pi_t} \abs{R - Q(\vec{X}_{\irange{t}})} \leq \alpha_t$, and this holds for all 
  $t \in \irange{n_0, n}$, for any analyst $\analyst$ and any data distribution $\scrP$.
\end{definition}

As previously mentioned, our generalization guarantees depend on the stability of the interaction.
We adapt the notion of \emph{posterior stability} introduced by \citet{jung2020new} to the growing data setting. 
For a query $Q$ in round~$t$, it measures stability in terms of the absolute difference between the ``true'' answer  
evaluated on the data distribution $\scrP^t$, and the answer evaluated on the posterior data distribution 
$\scrQ_{\Pi}^t \coloneqq \scrP^t \mid \Pi$. 
As in the static setting, the posterior data distribution is conditioned on the full transcript $\Pi$ at the end of 
the interaction, however unlike the static setting, the distribution is only taken over the data available up to 
round~$t$.
\begin{definition}[\citealp{jung2020new}] \zlabel{def:posterior-stability}
  Let $\epsilon, \delta \geq 0$. 
  An interaction $\sinteract(\blank; \blank, \mech)$ is $(\epsilon, \delta)$-\emph{posterior stable}, or 
  $(\epsilon, \delta)$-PS for short, 
  if with probability $1 - \delta$ over the randomness in the dataset $\vec{X} \sim \scrP^n$ and transcript 
  $\Pi \sim \sinteract(\vec{X}; \analyst, \mech)$, we have 
  $\max_{(Q, R) \in \Pi_t}  \abs{Q(\scrP^t) - Q(\scrQ_\Pi^t)} \leq \epsilon$ for 
  all $t \in \irange{n_0, n}$, and this holds for any analyst $\analyst$ and any data distribution $\scrP$.
\end{definition}

We will show that posterior stability follows from \emph{differential privacy}~\citep{dwork2006calibrating}.
We consider a variation of the standard definition of differential privacy, where the level of privacy\slash stability 
varies non-uniformly over data points. 
Similar definitions have been used in dynamic data settings~\citep{ebadi2015differential,stemmer2024private} to 
facilitate tighter privacy accounting. 
We have the same motivation here---by bounding the $\delta$ parameter for each data point separately, we can 
obtain tighter generalization bounds. 
\begin{definition} \zlabel{def:approx-dp-interaction}
  Let $\epsilon \geq 0$ and $\vec{\delta} \colon \irange{n} \to [0, 1]$. 
  An interaction $\sinteract(\blank; \blank, \mech)$ is $(\epsilon, \vec{\delta})$-\emph{differentially private}, or 
  $(\epsilon, \vec{\delta})$-DP for short, if for all analysts $\analyst$, all rounds $t \in \irange{n}$,
  all pairs of neighboring growing datasets $(\vec{x}, \vec{x}') \in \scrN_t$ differing on the $t$-th data point,
  and all measurable events $E \subseteq \scrT$: 
  \begin{equation}
    \Pr \mathopen{} \left(\sinteract( \vec{x}; \analyst, \mech) \in E\right)
      \leq \euler^{\epsilon} \Pr \left(\sinteract( \vec{x}'; \analyst, \mech) \in E\right) + \vec{\delta}(t).
  \end{equation}
\end{definition}
We consider a \emph{bounded} neighboring relation, meaning that $\scrN_t$ contains pairs of datasets 
$(\vec{x}, \vec{x}')$ where $\vec{x}$ can be obtained from $\vec{x}'$ by replacing the $t$-th data point.
We note that this non-uniform privacy definition includes the standard (uniform) definition as a special case: in 
particular, $(\epsilon, \vec{\delta})$-DP implies $(\epsilon, \max_{t} \vec{\delta}(t))$-DP. 
We refer the reader to \zcref{app:non-uniform-privacy} for discussion and results on non-uniform DP.

As a stability measure for adaptive data analysis, DP has several advantages. 
First, DP can be interpreted as a privacy guarantee, not merely a bound on stability.  
Second, DP has been widely studied in statistics and machine learning for almost two decades, so there are established 
private mechanisms for common data analysis and learning tasks. 
Third, DP supports composition which simplifies privacy accounting of the interaction. 
For example, one could instantiate a privacy filter~\citep{feldman2021individual,whitehouse2023fully} for a 
differentially private query-answering mechanism with target privacy parameters $\epsilon$ and $\delta$. 
This then allows for adaptive selection of the analyst's queries, the mechanism's algorithm, and the privacy parameters 
for individual queries, noting that the interaction may be forced to terminate early to ensure it satisfies 
$(\epsilon, \delta)$-DP. 
The post-processing property of DP is essential for this accounting to work, as the analyst's selection of the next 
query is viewed as post-processing on the mechanism's private estimates to previous queries. 
As a result, the analyst does not accrue an additional cost to privacy\slash stability, even in the worst case. 

\section{Generalization Guarantees for ADA on Growing Data} \zlabel{sec:theory} 

In this section, we present generalization guarantees for ADA in the growing data setting. 
\zcref[S]{fig:theory-outline} provides an outline of our key results, where the generalization guarantees 
(a.k.a.\ transfer theorems) are shaded in blue. 
Due to space constraints, we present selected results here and defer proofs and technical results to 
\zcref{app:proofs-theory}.

\begin{figure}
  \centering
  \resizebox{0.93\linewidth}{!}{%
    \begin{tikzpicture}[node distance=0.5cm and 0.5cm]
      \tikzstyle{def} = [rectangle, 
        rounded corners, 
        minimum width=2cm, 
        minimum height=0.95cm,
        align=center, 
        draw=black, 
        fill=yellow!30
      ]
      \tikzstyle{thm} = [def, fill=cyan!30]
      \tikzstyle{lem} = [thm, fill=orange!30]
      \tikzstyle{query} = [rectangle, 
        align=center
      ]
      \tikzstyle{querybox} = [rectangle, 
        rounded corners, 
        dashed,
        draw=black,
        inner sep=4pt
      ]
      \tikzstyle{arrow} = [thick,-latex]
      \node (def2) [def] {\textbf{\zcref{def:snapshot-acc}}\\Snapshot accuracy};
      \node (lem1) [lem, below=of def2] {\textbf{\zcref{lem:cond-acc}}\\Posterior accuracy};
      \node (lem6) [lem, below=of lem1] {\textbf{\zcref{lem:resampling}}\\Resampling Lemma};
      \node (thm1) [thm, right=of lem1] {\textbf{\zcref{thm:transfer-gen}}\\PS transfer theorem};
      \node (def3) [def, below=of thm1] {\textbf{\zcref{def:posterior-stability}}\\Posterior stability};
      \node (thm3) [thm, right=of thm1, xshift=4pt] {\textbf{\zcref{thm:transfer-low-sens-q}}\\DP transfer theorem};
      \node (lem3) [lem, right=of thm3] {\textbf{\zcref{lem:dp-ps-low-sens}}\\DP to PS conversion};
      \node (thm2) [thm, above=of thm3] {\textbf{\zcref{thm:transfer-dp}}\\DP transfer theorem};
      \node (lem2) [lem, right=of thm2] {\textbf{\zcref{lem:dp-ps-stat}}\\DP to PS conversion};
      \node (thm4) [thm, below=of thm3] {\textbf{\zcref{thm:transfer-min-q}}\\DP transfer theorem};
      \node (def4) [def, right=of lem2] {\textbf{\zcref{def:approx-dp-interaction}}\\Differential privacy};
      \node (statq) [rectangle, align=center, left=of thm2, xshift=0.4cm] {Statistical\\queries};
      \node (lowsensq) [rectangle, align=center, right=of lem3, xshift=-0.4cm] {$\Delta$-sensitive\\queries};
      \node (minq) [rectangle, align=center, right=of thm4, xshift=-0.4cm] {Min.\\queries};
      \node (boundedq) [rectangle, align=center, left=of lem1, xshift=0.4cm] {Bounded\\queries};
      \node (statqbox) [querybox, fit=(thm2) (lem2) (statq)] {};
      \node (lowsensqbox) [querybox, fit=(thm3) (lem3) (lowsensq)] {};
      \node (minqbox) [querybox, fit=(thm4) (minq)] {};
      \node (boundedqbox) [querybox, fit=(lem1) (thm1) (boundedq)] {};
      \draw [arrow] (def2) -- (lem1);
      \draw [arrow] (lem6) -- (lem1);
      \draw [arrow] (def3) -- (thm1);
      \draw [arrow] (lem1) -- (thm1);
      \draw [arrow] (def4) -- (lem2);
      \draw [arrow] (def4) -- (lem3);
      \draw [arrow] (lem2) -- (thm2);
      \draw [arrow] (thm1) -- (thm2);
      \draw [arrow] (lem3) -- (thm3);
      \draw [arrow] (thm1) -- (thm3);
      \draw [arrow] (thm3) -- (thm4);
    \end{tikzpicture}
  }%
  \caption{Outline of results in \zcref{sec:theory}. Arrows indicate key dependencies, and dashed boxes 
    indicate results that hold for a particular class of queries.}
  \zlabel{fig:theory-outline}
\end{figure}

Recall that our aim is to obtain generalization guarantees in the form of high probability bounds on worst-case
distributional accuracy.
Our proof technique is based on \citet{jung2020new}, whose bounds for static data outperform 
prior work~\citep{dwork2015preserving,bassily2016algorithmic}. 
The core idea of the proof involves decomposing the distributional error into two terms using the triangle inequality:
\begin{equation}
  \abs{r - q(\scrP^t)} \leq \abs{r - q(\scrQ_{\pi}^t)} + \abs{q(\scrQ_{\pi}^t) - q(\scrP^t)}.
  \label{eqn:dist-err-tri-inequality}
\end{equation}
Rather than comparing the response $r$ to query $q$ with the true value $q(\scrP^t)$ directly, we instead compare 
$r$ and $q(\scrP^t)$ with an intermediate value $q(\scrQ_{\pi}^t)$, which is the expectation of the query on the 
posterior distribution of the snapshot at round $t$ conditioned on the final transcript $\pi$. 
This intermediate value is chosen to align with the definitions of snapshot accuracy and posterior stability, which we 
have adapted for the growing data setting. 

We obtain worst-case probabilistic bounds on each term in \eqref{eqn:dist-err-tri-inequality} separately: 
a bound on the first term follows indirectly from snapshot accuracy and a bound on the second term follows directly 
from posterior stability. 
The bound we obtain for the first term is stated below.
\begin{restatable}{lemma}{lemcondacc} \zlabel{lem:cond-acc}
  Suppose $\mech$ is $(\{\alpha_t\}, \beta)$-snapshot accurate for $[0,1]$-bounded queries. 
  Then for any $c > 0$, with probability $1 - \frac{\beta}{c}$ with respect to the randomness in the dataset 
  $\vec{X} \sim \scrP^n$ and transcript $\Pi \sim \sinteract(\vec{X}; \analyst, \mech)$, we have for all 
  $t \in \irange{n_0, n}$ that
  $\max_{(Q, R) \in \Pi_t} \abs{R - Q(\scrQ_{\Pi}^t)} \leq \alpha_t + c$.
\end{restatable}
The proof relies on an elementary observation stated in \zcref{lem:resampling} of 
\zcref{app:proofs-theory}: that the joint distribution on datasets and transcripts does not change when the entire 
dataset is resampled from the posterior distribution $\scrQ_{\Pi}^{n}$ in the final round $n$. 
At first glance, this observation may not seem useful, as the expectation of the query in 
\eqref{eqn:dist-err-tri-inequality} is taken with respect to the posterior distribution of the data available at 
round $t$ when the query was submitted $\scrQ_{\Pi}^t$, not the posterior distribution of the entire dataset 
$\scrQ_{\Pi}^{n}$. 
However it turns out this is not a problem:
Since the event of interest does not depend on data points received after the round $t^\star$ when the worst-case 
deviation from the posterior expectation occurs, 
we can apply \zcref{lem:resampling} and marginalize over the remaining data points.

Combining \zcref{lem:cond-acc} with posterior stability yields our first generalization guarantee. 
\begin{restatable}[PS transfer theorem]{theorem}{thmtransfergen} \zlabel{thm:transfer-gen}
  Suppose $\mech$ is an $(\{\alpha_t\}, \beta)$-snapshot accurate mechanism for $[0,1]$-bounded queries and 
  $\sinteract(\blank; \blank, \mech)$ is an $(\epsilon, \delta)$-posterior stable interaction. 
  Then for every $c > 0$, 
  $\mech$ is $(\{\alpha_t'\}, \beta')$-distributionally accurate for $\alpha_t' = \alpha_t + c + \epsilon$ and 
  $\beta' = \frac{\beta}{c} + \delta$.
\end{restatable}
When the error bounds $\{\alpha_t\}$ are constant, we recover the same bound as \citet[Theorem~4]{jung2020new}, albeit 
for the more general growing data setting. 

We now turn to deriving generalization guarantees using differential privacy as a measure of stability. 
These guarantees are derived by first converting differential privacy (DP) to posterior stability (PS) in a way that 
exploits the structure of the query class, and then invoking \zcref{thm:transfer-gen}.
These steps are visualized in \zcref{fig:theory-outline}, where the DP to PS conversion results in 
\zcref{lem:dp-ps-stat,lem:dp-ps-low-sens} lead to generalization guarantees in 
\zcref{thm:transfer-dp,thm:transfer-low-sens-q}.

We begin with a conversion result for statistical queries. 
\begin{restatable}{lemma}{lemdppsstat} \zlabel{lem:dp-ps-stat} 
  An $(\epsilon, \vec{\delta})$-DP interaction $\sinteract(\blank; \blank, \mech)$ for statistical 
  queries is $(\epsilon', \delta')$-PS for $\epsilon' = \euler^{\epsilon} - 1 + 2 c \sum_{t = 1}^{n} \vec{\delta}(t) / n_0$,
  $\delta' = 1 / c$ and any $c > 0$.
\end{restatable}
If $\epsilon$ and $c$ are both constant, then the scaling of the lower bound $\epsilon'$ as a function of the final 
dataset size $n$ depends on the functional form of $\sum_{t = 1}^{n} \vec{\delta}(t)$. 
In the worst case where $\vec{\delta}(t)$ is uniform, we see that $\epsilon'$ grows linearly in $n$.
In the static setting, where $n = n_0$ and $\vec{\delta}(t) = \delta$, this factor disappears and we recover the 
result of \citet[Lemma~7]{jung2020new}. 
Combining this lemma with \zcref{thm:transfer-gen} yields a generalization guarantee for statistical queries.
\begin{theorem} \zlabel{thm:transfer-dp}
  Suppose $\mech$ is an $(\{\alpha_t\}, \beta)$-snapshot accurate mechanism and $\sinteract(\blank; \blank, \mech)$ 
  is an $(\epsilon, \vec{\delta})$-DP interaction for statistical queries.
  Then for any constants $c, d > 0$, $\mech$ is $(\alpha_t', \beta')$-distributionally accurate for 
  $\alpha_t' = \alpha_t + \euler^\epsilon - 1 + 2 c \sum_{t = 1}^{n} \vec{\delta}(t) / n_0 + d$ and 
  $\beta' = \beta/d + 1 / c$.
\end{theorem}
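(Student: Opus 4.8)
The plan is to obtain \zcref{thm:transfer-dp} by chaining the two ingredients flagged in \zcref{fig:theory-outline}: the DP-to-PS conversion for statistical queries (\zcref{lem:dp-ps-stat}) and the PS transfer theorem (\zcref{thm:transfer-gen}). First I would invoke \zcref{lem:dp-ps-stat} with its free parameter~$c$. Since $\sinteract(\blank; \blank, \mech)$ is assumed $(\epsilon, \delta)$-DP for statistical queries, the lemma tells us that this same interaction is $(\epsilon', \delta')$-posterior stable with $\epsilon' = \euler^{\epsilon} - 1 + 2cn/n_0$ and $\delta' = \delta/c$. This is precisely the hypothesis on the second term of the triangle-inequality decomposition \eqref{eqn:dist-err-tri-inequality} that feeds into the transfer theorem.

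Next I would apply \zcref{thm:transfer-gen} to the pair consisting of the assumed $(\{\alpha_t\}, \beta)$-snapshot accurate mechanism~$\mech$ and the $(\epsilon', \delta')$-posterior stable interaction just obtained, using that theorem's free parameter with a distinct name, say~$d$, to avoid clashing with the~$c$ above. (The boundedness hypothesis of \zcref{thm:transfer-gen} is met because statistical queries form a subclass of the $[0,1]$-bounded $\vec{\Delta}$-sensitive queries, cf.\ \zcref{sec:queries}.) \zcref[S]{thm:transfer-gen} then yields that $\mech$ is $(\{\alpha_t''\}, \beta'')$-distributionally accurate with $\alpha_t'' = \alpha_t + d + \epsilon'$ and $\beta'' = \beta/d + \delta'$. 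Substituting $\epsilon' = \euler^{\epsilon} - 1 + 2cn/n_0$ and $\delta' = \delta/c$ gives exactly $\alpha_t' = \alpha_t + \euler^{\epsilon} - 1 + 2cn/n_0 + d$ and $\beta' = \beta/d + \delta/c$, matching the claimed statement; since $c$ and $d$ enter through two separate invocations they range over $(0,\infty)$ independently, so the ``for any constants $c, d > 0$'' quantification is immediate.

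There is essentially no hard step here — this is a composition of results proved earlier. The only points requiring (brief) care are bookkeeping ones: keeping the two free constants notationally distinct, and verifying that the \emph{same} interaction simultaneously satisfies the snapshot-accuracy and DP hypotheses, which is exactly what \zcref{thm:transfer-dp} posits. If anything is a mild obstacle, it is making sure the query class is consistent across the two lemmas — \zcref{lem:dp-ps-stat} is stated specifically for statistical queries while \zcref{thm:transfer-gen} is stated for general $[0,1]$-bounded queries — so I would spell out the one-line inclusion that reconciles them before concluding.
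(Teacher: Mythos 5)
Your proposal is correct and is exactly the composition the paper intends: the paper explicitly introduces \zcref{thm:transfer-dp} with the sentence ``Combining this lemma [\zcref{lem:dp-ps-stat}] with \zcref{thm:transfer-gen} yields a generalization guarantee for statistical queries,'' and your two-step chaining with independently chosen constants $c$ and $d$ reproduces the stated $\alpha_t'$ and $\beta'$ verbatim. Your remark about reconciling the query classes (statistical queries being a $[0,1]$-bounded subclass of the $\vec{\Delta}$-sensitive queries handled by \zcref{thm:transfer-gen}) is the only bookkeeping detail worth spelling out, and you spell it out.
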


Next we consider low-sensitivity queries, where we obtain the following DP to PS conversion result.
\begin{restatable}{lemma}{lemdppslowsens} \zlabel{lem:dp-ps-low-sens}
  An $(\epsilon, \vec{\delta})$-DP interaction $\sinteract(\blank; \blank, \mech)$ for 
  $\vec{\Delta}$-sensitive queries is $(\epsilon', \delta')$-posterior stable for 
  $\epsilon' = \euler^{\epsilon} \max_{\tau_1 \in \irange{n_0, n}} \tau_1 \Delta_{\tau_1} - \min_{\tau_2 \in 
  \irange{n_0, n}} \tau_2 \Delta_{\tau_2} 
  + 4 c \left(\sum_{t = 1}^{n} \vec{\delta}(t)\right) \max_{\tau_3 \in \irange{n_0, n}} \Delta_{\tau_3}$, 
  $\delta' = 1 / c$ and any $c > 0$.
\end{restatable}
We see that the lower bound on the posterior stability depends on the extreme values of the sensitivity 
and the sensitivity weighted by dataset size. 
It is interesting to apply this lemma to statistical queries, which are a subset of $\vec{\Delta}$-sensitive queries 
with $\Delta_t \leq 1 / t$. 
We find $\epsilon' = \euler^{\epsilon} - 1 + 4 c \sum_{t = 1}^{n} \vec{\delta}(t) / n_0$, which is looser than 
\zcref{lem:dp-ps-stat} by a factor of~2 in the last term.
We again point out that the bound reduces to \citet[Lemma~15]{jung2020new} in the static case, where $n = n_0$ 
and $\vec{\delta}(t) = \delta$. 
By combining this lemma with \zcref{thm:transfer-gen} we obtain a generalization guarantee for low-sensitivity queries.
\begin{theorem} \zlabel{thm:transfer-low-sens-q}
  Suppose $\mech$ is an $(\{\alpha_t\}, \beta)$-snapshot accurate mechanism and $\sinteract(\blank; \blank, \mech)$ 
  is an $(\epsilon, \vec{\delta})$-DP interaction for $\vec{\Delta}$-sensitive queries.
  Then for any constants $c, d > 0$, $\mech$ is $(\{\alpha'_t\}, \beta')$-distributionally 
  accurate for 
  $\alpha'_t = \alpha_t + \euler^{\epsilon} \max_{\tau_1 \in \irange{n_0, n}} \tau_1 \Delta_{\tau_1} 
    - \min_{\tau_2 \in \irange{n_0, n}} \tau_2 \Delta_{\tau_2} 
    + 4 c \left(\sum_{t = 1}^{n} \vec{\delta}(t)\right) \max_{\tau_3 \in \irange{n_0, n}} \Delta_{\tau_3} + d$ 
  and $\beta' = \beta / d + 1 / c$.
\end{theorem}

Finally, we provide a generalization guarantee for minimization queries in \zcref{thm:transfer-min-q} of 
\zcref{app:gen-stability-min-queries}.

\section{Application: Gaussian Mechanism} \zlabel{sec:application}

In this section, we instantiate our generalization guarantees for statistical queries using the Gaussian mechanism. 
We focus on the Gaussian mechanism since it is simple to describe, it is easily adapted for growing data, and it 
is known to be optimal for answering a small number of queries $k \ll n^2$ on a static dataset of size 
$n$~\citep{bassily2016algorithmic}.\footnote{%
  In the regime where $k \gg n^2$, a variant of the private multiplicative weights mechanism for growing 
  data can be used instead~\citep{cummings2018differential}.
}
For ease of exposition, we assume the number of queries asked in each round is fixed before the analysis begins, 
while the queries themselves are adaptively chosen.   
This simplifies the privacy accounting and makes for a more direct comparison with prior work in the static 
setting~\citep{jung2020new}. 
In Appendix~\ref{app:application-fully-adaptive}, we remove this assumption, presenting a guarantee for the more 
general setting where the analyst adaptively decides how many queries to ask in each round. 
This guarantee relies on fully adaptive composition via a privacy filter~\citep{whitehouse2023fully}. 
All proofs for this section can be found in \zcref{app:proofs-application}.

\subsection{Generalization Guarantee} \zlabel{sec:application-guarantee}

We begin by defining the Gaussian mechanism for growing data. 
We also include a clipped variant that produces feasible estimates to $[0,1]$-bounded queries (see 
\zcref{sec:formulate}).

\begin{definition}
  The \emph{Gaussian mechanism} perturbs an estimate to a query $q$ based on snapshot $\vec{x}_{t}$ by adding 
  Gaussian noise with round-dependent standard deviation $\sigma_t > 0$. 
  Specifically, we have $\mech(q; \vec{x}_t) = q(\vec{x}_t) + z$ with $z \sim \normal(0, \sigma_t^2)$. 
  The \emph{clipped Gaussian mechanism} composes the Gaussian mechanism with the function 
  $\clamp_{[0, 1]}(x) = \max(0, \min(x, 1))$ as a post-processing step.
\end{definition}

We now analyze privacy for the clipped Gaussian mechanism. 
The result below is obtained using zero-concentrated differential privacy (zCDP, see \zcref{def:zcdp}), as it provides 
sharp composition bounds for the Gaussian mechanism~\citep{bun2016concentrated}. 
After proving that the interaction satisfies $\vec{\rho}$-zCDP, we convert to $(\epsilon, \vec{\delta})$-DP 
using \zcref{cor:zcdp-approx-dp-conversion}, which generalizes a result of \citet[Corollary~13]{canonne2020discrete} 
to non-uniform privacy parameters. 
\begin{restatable}{lemma}{lemgaussmechdp} \zlabel{lem:gauss-mech-dp}
  Consider an interaction $\sinteract(\cdot; \cdot, \mech)$ where $\mech$ is the ordinary or clipped Gaussian 
  mechanism. 
  Suppose the analyst decides to submit $k_\tau$ statistical queries in round $\tau \in \irange{n_0, n}$ before 
  $\sinteract(\cdot; \cdot, \mech)$ is executed.
  Then $\sinteract(\cdot; \cdot, \mech)$ satisfies $(\epsilon, \vec{\delta})$-DP for any $\epsilon > 0$ 
  and $\vec{\delta}(t) \leq \psi(\gamma^\star, \vec{\rho}(t), \epsilon)$, where
  $\vec{\rho}(t) = \sum_{\tau = n_0}^{n} k_\tau \ind{t \leq \tau} / 2 \sigma_\tau^2 \tau^2$, 
  $\psi(\gamma, \rho, \epsilon) = \euler^{(\gamma - 1)(\gamma \rho - \epsilon)}
  \left(1 - \gamma^{-1}\right)^{\gamma} / (\gamma - 1)$ and 
  $\gamma^\star = \arg \min_{\gamma \in (1, \infty)} \psi \mathopen{} \left(\gamma, \max_{t \in \irange{n}} \vec{\rho}(t), \epsilon\right)$.
\end{restatable}

Next we analyze snapshot accuracy. 
The bound depends on the inverse CDF of the Gaussian distribution, which is related to the inverse
complementary error function $\erfc^{-1}$.
\begin{restatable}{lemma}{lemgaussmechacc} \zlabel{lem:gauss-mech-acc}
  For any $\beta \in (0, 1)$, the clipped Gaussian mechanism with $\sigma_t \propto \alpha_t$ is 
  $(\{\alpha_t\}, \beta)$-snapshot accurate for $k$ queries with 
  \begin{equation}
    \frac{\alpha_t}{\sqrt{2} \sigma_t} = \erfc^{-1} \mathopen{} \left( 2 - 2 \left(1 - \frac{\beta}{2}\right)^{\frac{1}{k}} \right) 
    < \erfc^{-1} \mathopen{} \left( \frac{\beta}{k} \right).
  \end{equation}
\end{restatable}

Combining \zcref{lem:gauss-mech-dp,lem:gauss-mech-acc} with \zcref{thm:transfer-dp} yields the following 
generalization guarantee.
\begin{restatable}{theorem}{thmgausmechdistacc} \zlabel{thm:gaus-mech-dist-acc}
  Suppose the conditions of \zcref{lem:gauss-mech-dp} hold and assume $\sigma_t = \sigma > 0$.
  Then the clipped Gaussian mechanism is $(\alpha', \beta')$-distributionally accurate for $k$ statistical 
  queries for any $\beta' \in (0, 1)$ and 
  $\alpha' = \min_{\sigma, \beta, \epsilon \in \Theta} \lambda(\sigma, \beta, \epsilon)$, where 
  \begin{align}
    \lambda(\sigma, \beta, \epsilon) &= 
      \sqrt{2} \sigma \erfc^{-1} \mathopen{} \left(\frac{\beta}{k}\right) + \euler^{\epsilon} - 1 + \frac{\beta}{\beta'} 
       + \frac{2 \sum_{\tau = 1}^{n} \vec{\delta}(\tau)}{n_0 \beta'} 
        + \frac{2}{\beta'} \sqrt{\frac{2 \beta \sum_{\tau = 1}^{n} \vec{\delta}(\tau)}{n_0}},
  \end{align}
  $\Theta = \{ (\sigma, \beta, \epsilon) \in \reals^3 : \sigma > 0, 0 < \beta < 1, \epsilon \geq 0\}$ and 
  $\vec{\delta}(\cdot)$ is defined in \zcref{lem:gauss-mech-dp}.
\end{restatable}

\subsection{Empirical Comparison with Alternative Guarantees} \zlabel{sec:application-empirical}

\begin{figure*}
  \centering
  \includegraphics[width=\linewidth]{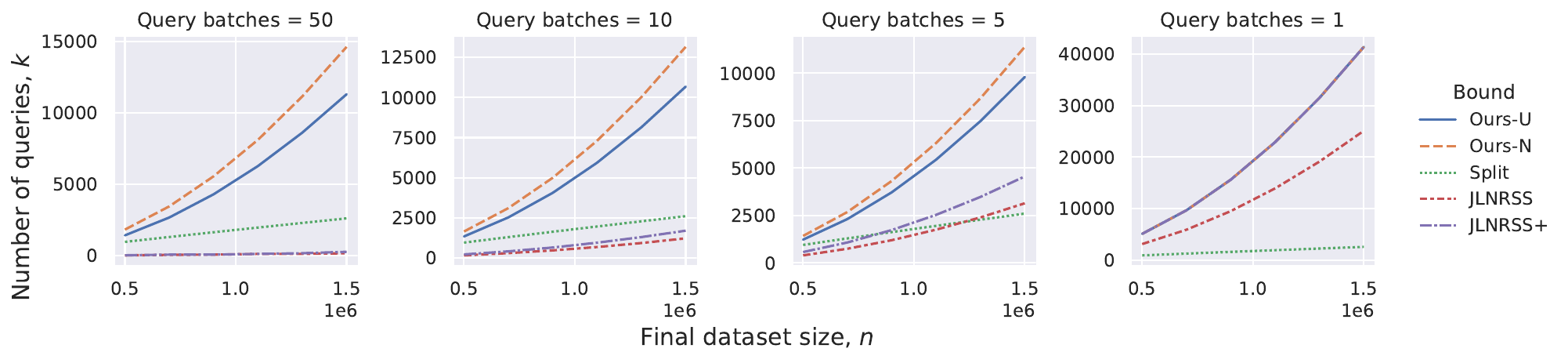}
  \caption{Comparison of the number of adaptive statistical queries that can be answered with error tolerance 
    $\alpha = 0.1$ and uniform coverage probability $1 - \beta = 0.95$ using a growing dataset with growth ratio 
    $n / n_0 = 3$ in a batched query setting. 
    The number of queries (vertical axis) is plotted as a function of the final dataset size $n$ (horizontal axis), 
    bound (curve style) and the number of query batches $b$ (horizontal panel). 
    The right-most panel ($b = 1$), represents the static baseline setting where the analyst forgoes intermediate 
    responses and submits all queries only after the entire dataset of size $n$ has arrived.}
  \zlabel{fig:K-vs-E-alpha-0.1-fixed-growth-ratio}
\end{figure*}

We empirically compare our generalization bounds for growing data with baselines composed from bounds for static 
data. 
When instantiating the bounds, we must specify how many queries $k_t$ are submitted in each round $t$.\footnote{
  Since the bounds are worst case with respect to the analyst and data distribution, no simulation is necessary.
} 
For simplicity, we assume $k$ queries are evenly split into $b$ batches, with a batch being submitted every 
$T = (n - n_0) / b$ rounds.\footnote{%
  When division by $b$ yields a remainder $r$, we distributed $r$ evenly across the first $r$ 
  batches\slash rounds. 
}
Concretely, for $b > 1$ we assume $k_t = k / b$ for $t \in \{n_0, n_0 + T, n_0 + 2T, \ldots, n\}$ and 
$k_t = 0$ otherwise. 
This setting represents a middle ground between two extremes: where all $k$ queries are submitted in the initial round 
or final round. 
To provide a strong baseline from prior work, we treat the $b = 1$ case specially: it represents the static data 
setting, where the analyst waits for all $n$ data points to arrive before submitting all $k$ queries in a single batch 
at the final round $t = n$.

We consider the following generalization bounds:
{\setlength{\leftmargini}{1.25em}
\begin{itemize}
  \item \textbf{Ours-N}. \zcref{thm:gaus-mech-dist-acc} with non-uniform privacy.
  \item \textbf{Ours-U}. \zcref{thm:gaus-mech-dist-acc} with uniform privacy
  (expression $\vec{\delta}(t)$ is replaced by $\max_{\tau \in \irange{n}} \vec{\delta}(\tau)$ in our bounds).
  \item \textbf{JLNRSS}. \zcref{prop:gauss-mech-dist-acc-jlnrss} in \zcref{app:jlnrss-bound}. 
  Composes the static bound of \citet[Theorem~13]{jung2020new} over query batches using a 
  fresh static dataset for each query batch, thereby yielding a worst-case guarantee over all queries. 
  The static bound depends on parameters that are optimized under the constraints $\beta = \delta$ and 
  $c = d$. 
  \item \textbf{JLNRSS+}. A tighter variant of \textbf{JLNRSS} that differs in two aspects: 
  (1) the parameters are optimized without imposing the simplifying constraints mentioned above, 
  and (2) the conversion from zCDP to $(\epsilon, \delta)$-DP is based on the tighter result that we 
  use (\zcref{cor:zcdp-approx-dp-conversion}).
  \item \textbf{Split}. An analogue of the sampling splitting baseline from prior 
  work~\citep{bassily2016algorithmic,jung2020new} adapted for growing data. 
  It splits incoming data points into samples of size $\floor{n / k}$ and answers each query using a fresh sample. 
  Unlike the other methods, this method may respond with a delay if a query arrives before a fresh sample is ready. 
  Since there is no data reuse, a generalization bound follows directly from Hoeffding's bound and the union bound 
  (see \zcref{app:sample-split}).
\end{itemize}}

\zcref[S]{fig:K-vs-E-alpha-0.1-fixed-growth-ratio} plots the number of adaptive queries $k$ that can be answered as a 
function of the final dataset size $n$ while guaranteeing a confidence interval around estimates of 
$\alpha' = 0.1$ with uniform coverage probability $1 - \beta' = 0.95$, following the empirical settings of 
\citet{jung2020new}. 
When varying $n$, we set the initial dataset size $n_0 = n/3$ to maintain a constant growth ratio. 
This choice models a practical scenario where a substantial dataset is available before adaptive analysis begins, 
which is necessary to obtain non-vacuous guarantees in a fully adaptive setting.
To present the tightest possible guarantees for each method, we follow the approach of \citet{jung2020new} and 
numerically optimize over the free parameters of the bounds (e.g., $\sigma$, $\beta$, $\epsilon$ for 
\textbf{Ours-N} and \textbf{Ours-U}) for each point plotted. 
Our primary goal is to compare generalization guarantees under data reuse, so the resulting privacy parameters may 
vary slightly between points. 
We note, however, that the optimal values were found to be stable in practice, varying only beyond the first 
significant figure (e.g., $\sigma \approx 0.008$, $\beta \approx 10^{-5}$, $\epsilon \approx 0.04$).

We observe quadratic growth in $n$ for \textbf{Ours-U} and \textbf{Ours-N}, linear 
growth in $n$ for \textbf{Split} and slower quadratic growth in $n$ for \textbf{JLNRSS} and \textbf{JNLRSS+}.
In this regime, \textbf{Ours-N} generally outperforms the other bounds, which is not surprising as \textbf{JLNRSS} in 
particular is not optimized for growing data $b > 1$.
We provide additional results in \zcref{app:experiments} examining the error for a fixed number of queries; 
the effect of $b$; and a setting where $n_0$ is fixed and the growth ratio varies.

\section{Related Work} \label{sec:related}

Various methods have been proposed in the statistics community for adaptive analysis of static data. 
For instance, $\alpha$-investing and related methods can be used to control false discovery rates for sequential 
hypothesis testing~\citep{foster2008alpha,javanmard2018online}. 
Another line of work aims to ensure statistical validity when model selection and significance testing are performed 
on the same dataset~\citep{benjamini2010simultaneous,taylor2015statistical,fithian2017optimal}.  
However, these methods are specialized and place 
restrictions on the analyst~\citep{dwork2015preserving}. 

Our paper builds on a body of work exploiting a connection between stable algorithms and generalization for adaptive 
data analysis~\citep{dwork2015preserving,dwork2015generalization,bassily2016algorithmic,
feldman2018calibrating,shenfeld2019necessary,fish2020sampling,rogers2020guaranteed,jung2020new,dinur2023differential,
blanc2023subsampling,shenfeld2023generalization}. 
\citet{dwork2015preserving} were first to establish a \emph{transfer theorem} showing that differentially private 
algorithms are sufficient to guarantee high-probability bounds on the worst-case error of an adaptive analysis.
In subsequent work~\citep{bassily2016algorithmic,jung2020new}, simpler proofs of the transfer theorem were given 
that achieve sharper bounds, while covering a broader range of queries.
Recent work has obtained bounds that improve on the worst case, by conditioning on data\slash 
queries~\citep{feldman2018calibrating,rogers2020guaranteed,shenfeld2023generalization} or constraining the 
analyst~\citep{zrnic2019natural}. 
In a similar spirit, concurrent work also constrains the analyst's queries, which in turn allows them to provide 
guarantees for certain classes of correlated, non-i.i.d. data \citep{rapoport2025tight}.
Lower bounds have also been studied exploiting connections to cryptography~\citep{hardt2014preventing,
steinke2015interactive,nissim2023adaptive}.
However, all of this prior work is limited to static data.

While there is no prior work on adaptive analysis of dynamic data, this setting has been studied in 
differential privacy. 
One line of work is known as differential privacy \emph{under continual observation}~\citep{dwork2010differential,
chan2011private}, where the goal is to repeatedly estimate a fixed function of a dataset whenever new data 
arrives.
An elementary task in this setting is estimating the number of ones in a binary stream, which can be solved using 
the binary mechanism~\citep{dwork2010differential,chan2011private}. 
More recently, alternative mechanisms have been proposed that achieve tighter error 
bounds~\citep{henzinger2023almost,fichtenberger2023constant} while also maintaining computational 
efficiency~\citep{andersson2023smooth}. 
These counting mechanisms have been used as a primitive to tackle other tasks including frequency 
estimation~\citep{cardoso2022differentially,upadhyay2019sublinear}, learning~\citep{denisov2022improved,
henzinger2023almost} and graph spectrum analysis~\citep{upadhyay2021differentially}.
A second line of work studies differential privacy for more general kinds of adaptive queries on dynamic 
data~\citep{cummings2018differential,qiu2022differential}. 
\citet{cummings2018differential} design mechanisms for growing data that call black-box mechanisms for static data 
on a schedule. 
\citet{qiu2022differential} go beyond growing data, designing mechanisms that estimate adaptive linear queries 
on datasets where items may be inserted or deleted over time. 
Our paper provides generalization guarantees for many of these mechanisms.

Several works have obtained generalization bounds for adaptive data analysis that do not rely on differential privacy. 
While differential privacy yields accuracy bounds that hold with high probability, one can also study weaker bounds 
that hold on average via connections to information theory~\citep{russo2016controlling,bassily2016algorithmic,
steinke2020reasoning}. 
Concepts from computational learning theory, such as Rademacher complexity, have also been used to obtain 
data-dependent generalization bounds for adaptive testing~\citep{destefani2019rademacher}. 
However, estimating data-dependent bounds on Rademacher complexity may be computationally challenging. 

Connections have been made between adaptive data analysis and seemingly disparate areas. 
\citet{steinke2023privacy} develop a method for privacy auditing that runs the algorithm under audit once on a 
single dataset rather than many times on adjacent datasets. 
The analysis of their method relies on generalization bounds for adaptive data analysis tailored for uniformly 
distributed binary data. 
\citet{liu2024program} use static and dynamic analysis to estimate the adaptivity of a program to assist in bounding 
its generalization error.

\section{Conclusion} \zlabel{sec:conclusion}

This paper extends the current understanding of generalization in adaptive data analysis to dynamic scenarios where 
data arrives incrementally over time, a setting increasingly relevant in many data-driven fields. 
Our approach builds on and extends the tightest known worst-case generalization guarantees for static 
data~\citep{jung2020new} by incorporating time-varying accuracy bounds and addressing the additional complexity 
introduced by data growth. 
Compared with bounds for static data, our bounds incorporate an additional factor proportional to the 
data growth, associated with the stability of the analysis as measured by differential privacy.
We instantiate our bounds for three query classes and demonstrate an empirical improvement over baselines for 
adaptive statistical queries answered with the clipped Gaussian mechanism. 

There are various opportunities to extend our work. 
While it is conventional to assume i.i.d.\ data when studying generalization, as we have done here, it would be 
interesting to consider non-i.i.d.\ growing data where the data distribution evolves over time. 
This may require bounds that depend on the rate of evolution, akin to bounds that depend on the correlation for 
non-i.i.d. data in the static setting~\citep{kontorovich2022adaptive}.
Another practical direction is to tighten our (mostly) worst-case bounds by conditioning on the actual queries and 
data realized in the analysis.
This could be informed by similar work for the static setting~\citep{feldman2018calibrating,rogers2020guaranteed,
shenfeld2023generalization}.
Finally, we believe there are opportunities to design new differentially private mechanisms for different kinds of 
adaptive queries on dynamic data, as there has been limited work in this area to 
date~\citep{cummings2018differential,qiu2022differential}.

\begin{ack}
    We acknowledge support from the Australian Research Council Discovery Project DP220102269.
\end{ack}

\printbibliography

\appendix

\section{Results for Minimization Queries} \zlabel{app:gen-stability-min-queries}

In this appendix, we provide a generalization guarantee for the class of low-sensitivity minimization queries. 
Queries in this class are parameterized by a loss function $L: \scrX^* \times \Theta \to [0, 1]$, where the first 
argument is a data snapshot and the second argument is a set of parameters from a parameter space $\Theta$.  
We require that $L$ is $\vec{\Delta}$-sensitive in its first argument, meaning for all $t \in \irange{n}$ and 
all $\theta \in \Theta$ we have 
$\abs{L(\vec{x}_{\irange{t}}, \theta) - L(\vec{x}_{\irange{t}}', \theta)} \leq \Delta_t$ for any pair of neighboring 
snapshots $\vec{x}_{\irange{t}}, \vec{x}_{\irange{t}}' \in \scrX^t$.
When evaluated on a snapshot, the result of the query is 
$q(\vec{x}_{\irange{t}}) \in \argmin_{\theta \in \Theta} L(\vec{x}_{\irange{t}}, \theta)$.
When evaluated on a data distribution $\scrD$, the result is 
$q(\scrD) \in \argmin_{\theta \in \Theta} \E_{\vec{X} \sim \scrD} [L(\vec{X}, \theta)]$. 

The definitions of distributional and snapshot accuracy that appear in \zcref{sec:gen-stability} must be adapted for 
minimization queries. 
Minimization queries require a different treatment because the output of the query is not scalar-valued in general, 
but rather a set of parameters $\theta \in \Theta$. 
We use the loss function $L$ associated with the query to measure the distributional and snapshot accuracy as defined 
below.

\begin{definition} \zlabel{def:distributional-acc-min-queries}
  Let $\alpha_t \geq 0$ for all rounds $t \in \irange{n_0, n}$ and $\beta \geq 0$. 
  A mechanism $\mech$ is $(\{\alpha_t\}, \beta)$-\emph{distributionally accurate} if for all analysts $\analyst$ 
  and data distributions $\scrP$ 
  \begin{equation}
    \Pr_{\vec{X} \sim \scrP^n, \Pi \sim \sinteract( \vec{X}; \analyst, \mech)} \Biggl(
      \bigcup_{t = n_0}^{n} \bigcup_{(L, \theta) \in \Pi_t} \left\{\abs*{
        \E_{\tilde{\vec{X}} \sim \scrP^t} [L(\tilde{\vec{X}}, \theta)]
        - \min_{\tilde{\theta} \in \Theta} \E_{\tilde{\vec{X}} \sim \scrP^t} [L(\tilde{\vec{X}}, \tilde{\theta})]
      } \geq \alpha_t \right\} 
    \Biggr) \leq \beta.
  \end{equation}
\end{definition}

\begin{definition} \zlabel{def:snapshot-acc-min-queries}
  Let $\alpha_t \geq 0$ for all rounds $t \in \irange{n_0, n}$ and $\beta \geq 0$. 
  A mechanism $\mech$ is $(\{\alpha_t\}, \beta)$-\emph{snapshot accurate} if for all analysts $\analyst$ and data 
  distributions $\scrP$
  \begin{equation}
    \Pr_{\vec{X} \sim \scrP^n, \Pi \sim \sinteract( \vec{X}; \analyst, \mech)} \Biggl(
      \bigcup_{t = n_0}^{n} \bigcup_{(L, \theta) \in \Pi_t} \left\{\abs*{
        L(\vec{X}_{\irange{t}}, \theta) 
        - \min_{\tilde{\theta} \in \Theta} L(\vec{X}_{\irange{t}}, \tilde{\theta})
      } \geq \alpha_t \right\} 
    \Biggr) \leq \beta.
  \end{equation}
\end{definition}

Following prior work~\citep{bassily2016algorithmic,jung2020new}, we obtain a generalization guarantee for $\vec{\Delta}$-sensitive minimization queries by applying \zcref{thm:transfer-low-sens-q} to a related set of 
$2 \vec{\Delta}$-sensitive scalar-valued queries. 

\begin{theorem} \zlabel{thm:transfer-min-q}
  Suppose $\mech$ is an $(\{\alpha_t\}, \beta)$-snapshot accurate mechanism and $\sinteract(\blank; \blank, \mech)$ 
  is an $(\epsilon, \vec{\delta})$-DP interaction for $\vec{\Delta}$-sensitive minimization queries. 
  Then for any constants $c, d > 0$, $\mech$ is $(\{\alpha'_t\}, \beta')$-distributionally accurate for 
  $\alpha_t' = \alpha_t + 2 \euler^{\epsilon} \max_{\tau_1 \in \irange{n_0, n}} \tau_1 \Delta_{\tau_1} - 2 
      \min_{\tau_2 \in \irange{n_0, n}} \tau_2 \Delta_{\tau_2} 
      + 8 c \sum_{t = 1}^{n} \vec{\delta}(t) \max_{\tau_3 \in \irange{n_0, n}} \Delta_{\tau_3} + d$
  and $\beta' = \beta / d + 1 / c$.
\end{theorem}
\begin{proof}
  We begin by defining a mapping $f: \scrQ_\text{min} \times \Theta \to \scrQ_\text{ls} \times [0, 1]$ that takes a 
  $\vec{\Delta}$-sensitive minimization query-estimate pair $(L, W) \in \scrQ_\text{min} \times \Theta$ and returns 
  a $2 \vec{\Delta}$-sensitive scalar-valued query-estimate pair $(q, r) \in \scrQ_\text{ls} \times [0,1]$:
  \begin{align}
    f(L, W) = (q, r) \quad \text{with} \quad
    q(\vec{x}) \coloneqq  L(\vec{x}, W) - \min_{W' \in \Theta} L(\vec{x}, W) \quad \text{and} \quad
    r \coloneqq 0.
  \end{align}
  Since $f$ does not depend on the dataset, we can apply it to all pairs in the minimization query transcript 
  $\Pi$ to yield a transformed transcript $\Pi'$ that also satisfies $(\epsilon, \vec{\delta})$-DP by the 
  post-processing guarantee (\zcref{thm:approx-dp-post-processing}).
  It is straightforward to see that the transformed transcript $\Pi'$ is $(\{\alpha_t\}, \beta)$-snapshot accurate iff 
  the original transcript $\Pi$ is.
  
  Next, observe that the probability of interest can be upper-bounded using Jensen's inequality to swap the order 
  of the minimization and expectation operations:
  \begin{align}
    & \Pr_{\vec{X} \sim \scrP^n, \Pi \sim \sinteract(\vec{X})} \Biggl(
    \bigcup_{t = n_0}^{n} \bigcup_{(L, W) \in \Pi_t} \left\{ \abs*{
      \E_{\vec{X}' \sim \scrP^t} \left[ L(\vec{X}', W) \right] 
      - \min_{W' \in \Theta} \E_{\vec{X}' \sim \scrP^t} \left[ L(\vec{X}', W') \right]} > \alpha_t'
    \right\}
    \Biggr) \\
    & \quad \leq \Pr_{\vec{X} \sim \scrP^n, \Pi \sim \sinteract(\vec{X})} \Biggl(
    \bigcup_{t = n_0}^{n} \bigcup_{(L, W) \in \Pi_t} \left\{ \abs*{
      \E_{\vec{X}' \sim \scrP^t} \left[ L(\vec{X}', W)
      - \min_{W' \in \Theta} L(\vec{X}', W') \right]} > \alpha_t'
    \right\}
    \Biggr) \\
    & \quad = \Pr_{\vec{X} \sim \scrP^n, \Pi \sim \sinteract(\vec{X})} \Biggl(
    \bigcup_{t = n_0}^{n} \bigcup_{(q, r) \in \Pi_t'} \left\{ \abs*{
      r - \E_{\vec{X}' \sim \scrP^t} \left[ q(\vec{X}') \right]} > \alpha_t'
    \right\}
    \Biggr).
  \end{align}
  In the last line above, we have rewritten the event in terms of the transformed transcript $\Pi'$.
  Applying \zcref{thm:transfer-low-sens-q} upper bounds this probability by $\beta'$, completing the proof.
\end{proof}

\section{Proofs for \texorpdfstring{\zcref{sec:theory}}{Section~\ref{sec:theory}}} \zlabel{app:proofs-theory}

\begin{lemma}[Resampling Lemma \citealp{jung2020new}] \zlabel{lem:resampling}
  Let $E \subseteq \scrX^n \times \scrT$ be any event. Then
  \begin{equation}
    \Pr_{\vec{X} \sim \scrP^n, \Pi \sim \sinteract(\vec{X})} \mathopen{} \left((\vec{X}, \Pi) \in E\right)
    = \Pr_{\vec{X} \sim \scrP^n, \Pi \sim \sinteract(\vec{X}), \vec{X}' \sim \scrQ_{\Pi}} \mathopen{} \left((\vec{X}', 
    \Pi) \in 
    E\right).
  \end{equation}
\end{lemma}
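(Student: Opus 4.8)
The plan is to recognize that this lemma is essentially definitional: $\scrQ_\Pi$ is, by construction, the conditional distribution of the dataset $\vec{X}$ given the transcript $\Pi$ in the joint experiment ``draw $\vec{X} \sim \scrP^n$, then $\Pi \sim \sinteract(\vec{X})$'', so resampling $\vec{X}'$ from $\scrQ_\Pi$ simply reconstitutes this joint law. To make this precise I would introduce the following notation. Let $\mu$ denote the joint law of $(\vec{X}, \Pi)$ on $\scrX^n \times \scrT$ induced by \zcref{alg:interaction} (with all internal randomness of $\analyst$ and $\mech$ marginalized out), let $\nu$ be its $\scrT$-marginal (the law of $\Pi$), and let $\{\mu_\pi\}_{\pi \in \scrT}$ be a regular conditional distribution of $\vec{X}$ given $\Pi = \pi$. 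Such a family exists and is essentially unique because $\scrX^n$ is finite — hence standard Borel — and the query range $[0,1]$ makes $\scrT$ a standard Borel space as well. The posterior appearing in \zcref{def:posterior-stability} is exactly $\scrQ_\pi = \mu_\pi$, and $\mu$ disintegrates as $\mu(d\vec{x}, d\pi) = \mu_\pi(d\vec{x})\,\nu(d\pi)$.

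With this in hand, the proof is a one-line disintegration argument. For any measurable $E \subseteq \scrX^n \times \scrT$ write $E_\pi \coloneqq \{\vec{x} \in \scrX^n : (\vec{x}, \pi) \in E\}$ for its $\pi$-section. In the right-hand experiment, conditional on $\Pi = \pi$ we draw $\vec{X}' \sim \scrQ_\pi = \mu_\pi$ (independently of all else), while $\Pi$ retains its marginal $\nu$; hence
\[
\begin{aligned}
  \Pr_{\vec{X} \sim \scrP^n, \Pi \sim \sinteract(\vec{X}), \vec{X}' \sim \scrQ_{\Pi}} \bigl((\vec{X}', \Pi) \in E\bigr)
    &= \int_{\scrT} \mu_\pi(E_\pi)\,d\nu(\pi) \\
    &= \mu(E) \\
    &= \Pr_{\vec{X} \sim \scrP^n, \Pi \sim \sinteract(\vec{X})} \bigl((\vec{X}, \Pi) \in E\bigr),
\end{aligned}
\]
where the first equality integrates the indicator $\ind{(\vec{x}', \pi) \in E}$ over $\vec{x}' \sim \mu_\pi$ and then $\pi \sim \nu$, and the second equality is precisely the disintegration of $\mu$ along $\nu$.

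The main obstacle here is not mathematical depth but careful bookkeeping of the probability space. One must verify that $\scrQ_\Pi$ is legitimately a regular conditional probability — i.e. that $\pi \mapsto \scrQ_\pi$ is a measurable family of distributions — so that ``$\vec{X}' \sim \scrQ_\Pi$'' is well-defined and the displayed integral is meaningful; and one should confirm that the randomness internal to the analyst $\analyst$ and mechanism $\mech$ is correctly absorbed into $\mu$, so that $\scrQ_\pi$ genuinely conditions only on $\Pi$ and returns a distribution over $\vec{X}$ alone. Because $\scrX$ is finite and the query range is $[0,1]$, all spaces in play are standard Borel, so these conditional distributions exist, are essentially unique, and the argument goes through without additional assumptions.
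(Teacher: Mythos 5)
Your proof is correct and takes essentially the same approach as the paper's: the paper marginalizes over $\vec{x}$, invokes the definition $\Pr_{\vec{X}' \sim \scrQ_\pi}(\vec{X}' = \vec{x}') = \Pr(\vec{X} = \vec{x}' \mid \Pi = \pi)$, and swaps the dummy variables $\vec{x}$ and $\vec{x}'$, which is precisely your disintegration identity $\int_{\scrT} \mu_\pi(E_\pi)\,d\nu(\pi) = \mu(E)$ written out as an explicit sum over the (finite) spaces. The only difference is cosmetic: you phrase it in the measure-theoretic language of regular conditional probabilities, while the paper works with elementary sums, which is slightly more direct given that $\scrX$ is assumed finite.
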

\begin{proof}
  The result follows by writing the probabilities as expectations, invoking the definition of $\scrQ_\Pi$, and 
  using the fact that $\vec{x}$ and $\vec{x}'$ can be swapped without changing the expectation:
  \begin{align}
    \Pr_{\vec{X} \sim \scrP^n, \Pi \sim \sinteract(\vec{X}), \vec{X}' \sim \scrQ_{\Pi}} \mathopen{} \left(
    (\vec{X}', \Pi) \in E
    \right) 
    &= \sum_{\vec{x}, \pi, \vec{x}'} \Pr(\vec{X} = \vec{x}, \Pi = \pi) \Pr_{\vec{X}' \sim \scrQ_\pi}(\vec{X}' = 
    \vec{x}')
    \ind{(\vec{x}', \pi) \in E} \\ 
    &= \sum_{\pi, \vec{x}'} \Pr(\Pi = \pi) \Pr(\vec{X} = \vec{x}' \mid \Pi = \pi) \ind{(\vec{x}', \pi) \in E} \\ 
    &= \sum_{\vec{x}, \pi} \Pr(\vec{X} = \vec{x}, \Pi = \pi) \ind{(\vec{x}, \pi) \in E} \\
    &= \Pr_{\vec{X} \sim \scrP^n, \Pi \sim \sinteract(\vec{X})} \mathopen{} \left((\vec{X}, \Pi) \in E\right).
  \end{align}
\end{proof}

\lemcondacc*
\begin{proof}
  Given a transcript $\pi \in \scrT$, let a round-query-estimate tuple that achieves the largest 
  $\alpha_t$-adjusted posterior error be denoted
  \begin{equation}
    t^\star, q^\star, r^\star \in \argmax_{t \in \irange{n_0, n}, (q,r) \in \pi_t} \abs{r - q(\scrQ_\pi^t)} - \alpha_t,
  \end{equation}
  where we have omitted the dependence on $\pi$. 
  We use this definition to write the probability of interest in terms of a single event, which we then express as a 
  union of two independent (since $\alpha_{t^\star} + c> 0$) events corresponding to the branches of the absolute 
  value function:
  \begin{align}
    & \Pr_{\vec{X} \sim \scrP^n, \Pi \sim \sinteract(\vec{X})} \Biggl(
    \bigcup_{t = n_0}^{n} \bigcup_{(q, r) \in \Pi_t} \left\{ \abs{r - q(\scrQ_{\Pi}^t)} > \alpha_t + c \right\} 
    \Biggr) \\
    & \quad = \Pr_{\vec{X} \sim \scrP^n, \Pi \sim \sinteract(\vec{X})} \mathopen{} \left(
    \abs{r^\star - q^\star(\scrQ_{\Pi}^{t^\star})} - \alpha_{t^\star} > c 
    \right) \\
    & \quad = \Pr_{\vec{X} \sim \scrP^n, \Pi \sim \sinteract(\vec{X})} \mathopen{} \left(
    r^\star - q^\star(\scrQ_{\Pi}^{t^\star}) - \alpha_{t^\star} > c 
    \right) + \Pr_{\vec{X} \sim \scrP^n, \Pi \sim \sinteract(\vec{X})} \mathopen{} \left(
    q^\star(\scrQ_{\Pi}^{t^\star}) - r^\star - \alpha_{t^\star} > c 
    \right). \label{eqn:lem6-post-err-two-terms}
  \end{align}
  
  Observe that the first term in \eqref{eqn:lem6-post-err-two-terms} can be bounded as follows:
  \begin{align}
    &\Pr_{\vec{X} \sim \scrP^n, \Pi \sim \sinteract(\vec{X})} \mathopen{} \left(
    r^\star - q^\star(\scrQ_{\Pi}^{t^\star}) - \alpha_{t^\star} > c
    \right) \\
    & \quad = \Pr_{\vec{X} \sim \scrP^n, \Pi \sim \sinteract(\vec{X})} \mathopen{} \left(
    \E_{\vec{X}' \sim \scrQ_{\Pi}} \left[r^\star - q^\star(\vec{X}_{\irange{t^\star}}') - \alpha_{t^\star} \right] > c
    \right) \label{eqn:lem6-post-def} \\
    & \quad \leq \Pr_{\vec{X} \sim \scrP^n, \Pi \sim \sinteract(\vec{X})} \mathopen{} \left(
    \E_{\vec{X}' \sim \scrQ_{\Pi}} \left[ \max \{r^\star - q^\star(\vec{X}_{\irange{t^\star}}') - \alpha_{t^\star}, 0\} 
    \right] > c
    \right) \label{eqn:lem6-jensen} \\
    & \quad \leq \frac{1}{c} \E_{\vec{X} \sim \scrP^n, \Pi \sim \sinteract(\vec{X})} \mathopen{} \left[ 
    \E_{\vec{X}' \sim \scrQ_{\Pi}} \left[ \max \{r^\star - q^\star(\vec{X}_{\irange{t^\star}}') - \alpha_{t^\star}, 0\} 
    \right]
    \right] \label{eqn:lem6-markov} \\ %
    & \quad \leq \frac{1}{c} \E_{\vec{X} \sim \scrP^n, \Pi \sim \sinteract(\vec{X})} \mathopen{} \left[ 
    \Pr_{\vec{X}' \sim \scrQ_{\Pi}} \mathopen{} \left( r^\star - q^\star(\vec{X}_{\irange{t^\star}}') - 
    \alpha_{t^\star} > 0 \right)
    \right] \label{eqn:lem6-bounded} \\ %
    & \quad = \frac{1}{c} \Pr_{\vec{X} \sim \scrP^n, \Pi \sim \sinteract(\vec{X}), \vec{X}' \sim \scrQ_{\Pi}} 
    \mathopen{} \left( 
    r^\star - q^\star(\vec{X}_{\irange{t^\star}}') - \alpha_{t^\star} > 0
    \right) \\ %
    & \quad = \frac{1}{c} \Pr_{\vec{X} \sim \scrP^n, \Pi \sim \sinteract(\vec{X})} \mathopen{} \left(
    r^\star - q^\star(\vec{X}_{\irange{t^\star}}) - \alpha_{t^\star} > 0
    \right) \label{eqn:lem6-resample} %
  \end{align}
  where line~\eqref{eqn:lem6-post-def} follows from the definition of $q_{t,j}(\scrQ_{\Pi}^t)$;
  line~\eqref{eqn:lem6-markov} follows from Markov's inequality; 
  line~\eqref{eqn:lem6-bounded} follows from the fact that $r - q(\vec{X}_{\irange{t}}') - \alpha_t \leq 1$ 
  for a $[0,1]$-bounded query and mechanism; 
  and line~\eqref{eqn:lem6-resample} follows from \zcref{lem:resampling}.
  By symmetry, a similar bound holds for the second term in \eqref{eqn:lem6-post-err-two-terms}. 
  
  Substituting these bounds in \eqref{eqn:lem6-post-err-two-terms} gives
  \begin{align}
    & \Pr_{\vec{X} \sim \scrP^n, \Pi \sim \sinteract(\vec{X})} \Biggl(
    \bigcup_{t = n_0}^{n} \bigcup_{(q, r) \in \Pi_t} \left\{ \abs{r - q(\scrQ_{\Pi}^t)} > \alpha_t + c \right\} 
    \Biggr) \\
    & \quad \leq \frac{1}{c} \Pr_{\vec{X} \sim \scrP^n, \Pi \sim \sinteract(\vec{X})} \mathopen{} \left(
    \abs{r^\star - q^\star(\vec{X}_{\irange{t^\star}})} - \alpha_{t^\star} > 0
    \right) \label{eqn:lem6-abs-value} \\
    & \quad \leq \frac{1}{c} \Pr_{\vec{X} \sim \scrP^n, \Pi \sim \sinteract(\vec{X})} \mathopen{} \Biggl(
    \bigcup_{t = n_0}^{n} \bigcup_{(q,r) \in \Pi_t} \{\abs{r - q(\vec{X}_{\irange{t}})} > \alpha_t\} 
    \Biggr) \label{eqn:lem6-star-suboptimal} \\
    & \quad \leq \frac{\beta}{c}, \label{eqn:lem6-snapshot-acc}
  \end{align}
  where line~\eqref{eqn:lem6-abs-value} follows from the independence of the events in the two terms; 
  line~\eqref{eqn:lem6-star-suboptimal} follows since the starred round-query-estimate tuple may not achieve the 
  largest $\alpha_t$-adjusted snapshot error; 
  and line~\eqref{eqn:lem6-snapshot-acc} follows from the definition of $(\{\alpha_t\}, \beta)$-snapshot accuracy.
\end{proof}

\thmtransfergen*
\begin{proof}
  Given a transcript $\pi \in \scrT$, let a round-query-estimate tuple that achieves the largest 
  $\alpha_t$-adjusted distributional error be denoted
  \begin{equation}
    t^\star, q^\star, r^\star \in \argmax_{t \in \irange{n_0, n}, (q,r) \in \pi_t} \abs{r - q(\scrP^t)} - \alpha_t,
  \end{equation}
  where we have omitted the dependence on $\pi$. 
  Using this definition, we express the probability of interest in terms of a single event, and then obtain an 
  upper bound using the triangle inequality and the union bound:
  \begin{align}
    & \Pr_{\vec{X} \sim \scrP^n, \Pi \sim \sinteract(\vec{X})} \Biggl(
    \bigcup_{t = n_0}^{n} \bigcup_{(q,r) \in \Pi_t} \{ \abs{r - q(\scrP^t)} > \alpha_t + c + \epsilon \} 
    \Biggr) \\
    & \quad = \Pr_{\vec{X} \sim \scrP^n, \Pi \sim \sinteract(\vec{X})} \mathopen{} \left(
    \abs{r^\star - q^\star(\scrP^{t^\star})} - \alpha_{t^\star} > c + \epsilon
    \right) \\
    & \quad \leq \Pr_{\vec{X} \sim \scrP^n, \Pi \sim \sinteract(\vec{X})} \mathopen{} \left(
    \abs{r^\star - q^\star(\scrQ_{\Pi}^{t^\star})} - \alpha_{t^\star} 
    + \abs{q^\star(\scrQ_{\Pi}^{t^\star}) - q^\star(\scrP^{t^\star})} > c + \epsilon
    \right) \label{eqn:transfer-triangle} \\
    & \quad \leq \Pr_{\vec{X} \sim \scrP^n, \Pi \sim \sinteract(\vec{X})} \mathopen{} \left(
    \abs{r^\star - q^\star(\scrQ_{\Pi}^{t^\star})} - \alpha_{t^\star} > c
    \right) \\
    & \quad \qquad {} + \Pr_{\vec{X} \sim \scrP^n, \Pi \sim \sinteract(\vec{X})} \mathopen{} \left(
    \abs{q^\star(\scrQ_{\Pi}^{t^\star}) - q^\star(\scrP^{t^\star})} > \epsilon
    \right). \label{eqn:transfer-union}
  \end{align}
  We can upper bound the two probabilities in \eqref{eqn:transfer-union} by maximizing the LHS of the inequalities 
  with respect to $t^\star, q^\star, r^\star \in \bigcup_{t = n_0}^{n} \{(t, q, r) : (q, r) \in \Pi_t\}$. 
  Then \zcref{lem:cond-acc} upper bounds the first probability by $\frac{\beta}{c}$ and 
  \zcref{def:posterior-stability} bounds the second probability by $\delta$, giving the required result.
\end{proof}

\lemdppsstat*
\begin{proof}
  Given a transcript $\pi \in \scrT$, let the round-query-estimate tuple that achieves the largest absolute 
  difference be denoted
  \begin{equation}
    t^\star, q^\star, r^\star
    \in \argmax_{t \in \irange{n_0, n}, (q, r) \in \pi_t} \abs{q(\scrQ_\pi^t) - q(\scrP^t)},
  \end{equation}
  where we have omitted the dependence on $\pi$.
  
  Define for any $\alpha > 0$, $x \in \scrX$, $t \in \irange{n}$, $z \in [0, 1/n_0]$:
  \begin{align}
    \vec{\Pi}(\alpha) &= \left\{ 
    \pi \in \scrT : q^\star(\scrQ_{\pi}^{t^\star}) - q^\star(\scrP^{t^\star}) > \alpha 
    \right\}, \\
    \scrX^+(\pi) &= \left\{
    x \in \scrX : \Pr_{\vec{X}' \sim \scrQ_{\pi}, t \sim \unif_{\irange{n_0, t^\star}}}(X_{\tau}' = x)
    \geq \Pr_{X \sim \scrP}(X = x)
    \right\}, \\
    B^{+}(\alpha) &= \bigcup_{\pi \in \vec{\Pi}(\alpha)} (\scrX^+(\pi) \times \{\pi\}), \\
    \vec{\Pi}^+(\alpha, x) &= \{\pi \in \scrT: (x, \pi) \in B^{+}(\alpha)\}, \\
    \vec{\Pi}^+(\alpha, x, z, t) &= \left\{
    \pi \in \vec{\Pi}^+(\alpha, x) : t \leq t^\star,  1 > z t^\star \right\}. \\
  \end{align}
  
  Fix any $\alpha > 0$ and let $\tilde{\delta} = \sum_{t = 1}^{n} \vec{\delta}(t)$.
  Suppose 
  $\Pr_{\vec{X} \sim \scrP^n, \Pi \sim \sinteract(\vec{X})} \left(
  \abs{q^\star(\scrQ_{\Pi}^{t^\star}) - q^\star(\scrP^{t^\star})} > \alpha 
  \right) > \frac{1}{c}$, which implies either 
  $\Pr_{\vec{X} \sim \scrP^n, \Pi \sim \sinteract(\vec{X})} \left(
  q^\star(\scrQ_{\Pi}^{t^\star}) - q^\star(\scrP^{t^\star}) > \alpha 
  \right) > \frac{1}{2c}$ or 
  $\Pr_{\vec{X} \sim \scrP^n, \Pi \sim \sinteract(\vec{X})} \left(
  q^\star(\scrP^{t^\star}) - q^\star(\scrQ_{\Pi}^{t^\star}) > \alpha 
  \right) > \frac{1}{2c}$.
  Without loss of generality assume 
  \begin{equation}
    \Pr_{\vec{X} \sim \scrP^n, \Pi \sim \sinteract(\vec{X})} \left(
    q^\star(\scrQ_{\Pi}^{t^\star}) - q^\star(\scrP) > \alpha
    \right) 
    = \Pr(\Pi \in \vec{\Pi}(\alpha))
    > \frac{1}{2c}. \label{eqn:lem7-delta-inequal}
  \end{equation}
  
  From the definition of $\vec{\Pi}(\alpha)$, we have
  \begin{align}
    \alpha \Pr(\Pi \in \vec{\Pi}(\alpha)) 
    & < \sum_{\pi \in \vec{\Pi}(\alpha)} 
    \{ q^\star(\scrQ_{\pi}^{t^\star}) - q^\star(\scrP^{t^\star}) \}
    \\ %
    & = \sum_{\pi \in \vec{\Pi}(\alpha)} \Big\{
    \E_{\vec{X}' \sim \scrQ_{\pi}} [q^\star(\vec{X}_{\irange{t^\star}}')] 
    - \E_{\vec{X} \sim \scrP^n}[q^\star(\vec{X}_{\irange{t^\star}})]
    \Big\} \\ %
    & = \sum_{\pi \in \vec{\Pi}(\alpha)} \Big\{
    \E_{\vec{X}' \sim \scrQ_{\pi}, t \sim \unif_{\irange{t^\star}}} [q^\star(X_{t}')] 
    - \E_{X \sim \scrP}[q^\star(X)]
    \Big\}, \label{eqn:lem7-linearq} %
  \end{align}
  where we have used linearity of statistical queries in the last line.
  Expanding out the difference of expectations, we have
  \begin{align}
    &\E_{\vec{X}' \sim \scrQ_{\pi}, t \sim \unif_{\irange{t^\star}}} [q^\star(X_{t}')] 
    - \E_{X \sim \scrP}[q^\star(X)] \\
    &\quad = \sum_{x \in \scrX} q^\star(x) \left\{ 
    \frac{1}{t^\star} \sum_{t = 1}^{t^\star} \Pr(X_{t}' = x \mid \Pi = \pi) - \Pr(X = x)
    \right\} \\
    &\quad \leq \sum_{x \in \scrX^+(\pi)} \left\{ 
    \frac{1}{t^\star} \sum_{t = 1}^{t^\star} \Pr(X_{t}' = x \mid \Pi = \pi) - \Pr(X = x)
    \right\} \label{eqn:lem7-bound-query}\\ %
    &\quad = \sum_{x \in \scrX^+(\pi)} \frac{\Pr(X = x)}{\Pr(\Pi = \pi)} \left\{
    \frac{1}{t^\star} \sum_{t = 1}^{t^\star} \Pr(\Pi = \pi \mid X_{t}' = x) - \Pr(\Pi = \pi)
    \right\} \label{eqn:lem7-bayes} \\ %
  \end{align}
  where \eqref{eqn:lem7-bound-query} follows by dropping negative terms from the sum and using the boundedness of 
  statistical queries,
  and \eqref{eqn:lem7-bayes} follows from Bayes' theorem.
  
  Putting \eqref{eqn:lem7-bayes} in \eqref{eqn:lem7-linearq} and swapping the order of the sums gives
  \begin{align}
    &\alpha \Pr(\Pi \in \vec{\Pi}(\alpha)) \\
    &< \sum_{\pi \in \vec{\Pi}(\alpha)} \sum_{x \in \scrX^+(\pi)} \Pr(X = x) \left\{
    \frac{1}{t^\star} \sum_{t = 1}^{t^\star} \Pr(\Pi = \pi \mid X_{t}' = x) - \Pr(\Pi = \pi)
    \right\} \\
    &= \sum_{t = 1}^{n} \sum_{x \in \scrX} \Pr(X = x) \sum_{\pi \in \vec{\Pi}^+(\alpha, x)} 
    \frac{1}{t^\star} \ind{t \leq t^\star} \left\{ \Pr(\Pi = \pi \mid X_{t}' = x) - \Pr(\Pi = \pi) 
    \right\} \\ %
    &= \sum_{t = 1}^{n} \sum_{x \in \scrX} \Pr(X = x) \sum_{\pi \in \vec{\Pi}^+(\alpha, x)}  
    \int_{0}^{\frac{1}{n_0}} \ind{\frac{1}{t^\star} > z} \, dz \, \ind{t \leq t^\star} 
    \left\{ \Pr(\Pi = \pi \mid X_{t}' = x) - \Pr(\Pi = \pi) \right\} \\ %
    &= \sum_{t = 1}^{n} \sum_{x \in \scrX} \Pr(X = x) \int_{0}^{\frac{1}{n_0}} \left\{
    \Pr(\Pi \in \vec{\Pi}^+(\alpha, x, z, t) \mid X_{t}' = x) - \Pr(\Pi \in \vec{\Pi}^+(\alpha, x, z, t)) 
    \right\} \, dz \\ %
    &\leq \sum_{t = 1}^{n} \sum_{x \in \scrX} \Pr(X = x) \int_{0}^{\frac{1}{n_0}} \left\{
    (\euler^{\epsilon} - 1) \Pr(\Pi \in \vec{\Pi}^+(\alpha, x, z, t)) + \vec{\delta}(t)
    \right\} \, dz \label{eqn:lem7-dp} \\ %
    &= \sum_{x \in \scrX} \Pr(X = x) \left\{
    (\euler^{\epsilon} - 1) \Pr(\Pi \in \vec{\Pi}^+(\alpha, x)) + \frac{\tilde{\delta}}{n_0} 
    \right\} \\ %
    &= (\euler^{\epsilon} - 1) \Pr(\Pi \in \vec{\Pi}(\alpha)) + \frac{\tilde{\delta} }{n_0} \\ %
    &< \left(\euler^{\epsilon} - 1 + \frac{2 c \tilde{\delta}}{n_0}\right) \Pr(\Pi \in \vec{\Pi}(\alpha))
    \label{eqn:lem7-delta} %
  \end{align}
  where \eqref{eqn:lem7-dp} follows from \zcref{lem:dp}, and \eqref{eqn:lem7-delta} follows from 
  \eqref{eqn:lem7-delta-inequal}.
  This is a contradiction for $\alpha \geq \euler^{\epsilon} - 1 + 2 c \tilde{\delta} / n_0$.
\end{proof}

\lemdppslowsens*
\begin{proof}
  Given a transcript $\pi \in \scrT$, let a round-query-result tuple that achieves the largest absolute difference 
  be denoted
  \begin{equation}
    t^\star, q^\star, r^\star \in \argmax_{t \in \irange{n_0, n}, (q, r) \in \pi_t} \abs{q(\scrQ_\pi^t) - q(\scrP^t)},
  \end{equation}
  where the dependence on $\pi$ is omitted.
  
  Let $\concat$ denote the concatenation operator, such that for any pair of sequences $\vec{x}$ and $\vec{y}$ of 
  length $n$ and $m$ respectively, we have $\vec{x} \concat \vec{y} \coloneqq (x_1, \ldots, x_{n}, y_1, \ldots, y_{m})$. 
  Let $\Delta^\star = \max_{t \in \irange{n_0, n}} \Delta_t$ and 
  $\tilde{\delta} = \sum_{t = 1}^{n} \vec{\delta}(t)$.
  For any $\alpha \geq 0$, $\tau, t \in \irange{n}$, $\vec{x} \in \scrX^n$ and $z \in [0, 2 \Delta^\star]$, define
  \begin{align}
    v(q, \tau, \vec{x}_{\irange{t}}) &\coloneqq \E_{\vec{X}' \sim \scrP^{\max\{\tau - t, 0\}}} \left[
    q(\vec{x}_{\irange{t}} \concat \vec{X}')
    \right] \\
    \vec{\Pi}(\alpha) &\coloneqq \left\{ 
    \pi \in \scrT : q^\star(\scrQ_\pi^{t^\star}) - q^\star(\scrP^{t^\star}) > \alpha 
    \right\}, \\
    \vec{\Pi}(\alpha, z, \vec{x}_{\irange{t}}) &\coloneqq \left\{  
    \pi \in \vec{\Pi}(\alpha) : 
    \ind{t \leq t^\star} \left(
    v(q^\star, t^\star, \vec{x}_{\irange{t}}) - v(q^\star, t^\star, \vec{x}_{\irange{t - 1}}) + \Delta_{t^\star}
    \right) > z  
    \right\}.
  \end{align}
  Using the definition of differential privacy, observe that
  \begin{align}
    & \sum_{\pi \in \vec{\Pi}(\alpha)} \Pr(\Pi = \pi \mid \vec{X} = \vec{x})
    \ind{t \leq t^\star} \left(
    v(q^\star, t^\star, \vec{x}_{\irange{t}}) - v(q^\star, t^\star, \vec{x}_{\irange{t - 1}}) + \Delta_{t^\star}
    \right) \\ 
    &\quad = \sum_{\pi \in \vec{\Pi}(\alpha)} \Pr(\Pi = \pi \mid \vec{X} = \vec{x})
    \int_{0}^{2 \Delta^\star} \ind{\ind{t \leq t^\star} (
      v(q^\star, t^\star, \vec{x}_{\irange{t}}) - v(q^\star, t^\star, \vec{x}_{\irange{t - 1}}) + \Delta_{t^\star}
      ) > z} \, dz \\ 
    &\quad = \int_{0}^{2 \Delta^\star} \Pr(\Pi \in \vec{\Pi}(\alpha, z, \vec{x}_{\irange{t}}) \mid \vec{X} = \vec{x}) \, dz 
    \\ 
    &\quad \leq \int_{0}^{2 \Delta^\star} \left(
    e^{\epsilon} \Pr(\Pi \in \vec{\Pi}(\alpha, z, \vec{x}_{\irange{t}}) \mid \vec{X} = \replace(\vec{x}, t, x')) + 
    \vec{\delta}(t)
    \right) \, dz \\ 
    &\quad = e^{\epsilon} \sum_{\pi \in \vec{\Pi}(\alpha)} \Pr(\Pi = \pi \mid \vec{X} = \replace(\vec{x}, t, x'))
    \ind{t \leq t^\star} \left(
    v(q^\star, t^\star, \vec{x}_{\irange{t}}) - v(q^\star, t^\star, \vec{x}_{\irange{t - 1}}) + \Delta_{t^\star}
    \right) \\
    & \quad \qquad {} + 2 \Delta^\star \vec{\delta}(t) \label{eqn:lem-dp-ps-low-sens-dp}
  \end{align}
  where $\replace(\vec{x}, t, x')$ denotes the dataset obtained from $\vec{x}$ by replacing the $t$-th data point with 
  $x' \in \scrX$.
  Taking the expectation of inequality \eqref{eqn:lem-dp-ps-low-sens-dp} with respect to $\vec{X} \sim \scrP^{n}$ and 
  $X' \sim \scrP$, and summing over $t \in \irange{n}$, we have 
  \begin{align}
    &\sum_{t = 1}^{n} \E_{\vec{X} \sim \scrP^n} \Biggl[
    \sum_{\pi \in \vec{\Pi}(\alpha)} \Pr(\Pi = \pi \mid \vec{X})
    \ind{t \leq t^\star} \left(
    v(q^\star, t^\star, \vec{X}_{\irange{t}}) - v(q^\star, t^\star, \vec{X}_{\irange{t - 1}}) + \Delta_{t^\star}
    \right) 
    \Biggr] \\ 
    &\quad \leq \sum_{t = 1}^{n} \E_{\vec{X} \sim \scrP^n, X \sim \scrP} \Biggl[
    e^{\epsilon} \sum_{\pi \in \vec{\Pi}(\alpha)} \Pr(\Pi = \pi \mid \vec{X}) \\
    &\qquad {} \times \ind{t \leq t^\star} \left(
    v(q^\star, t^\star, \replace(\vec{x}, t, x')) - v(q^\star, t^\star, \vec{x}_{\irange{t - 1}}) + \Delta_{t^\star}
    \right) + 2 \Delta^\star \delta 
    \Biggr] \label{eqn:lem-dp-ps-low-sens-same-dist} \\
    &\quad \leq \sum_{t = 1}^{n} \E_{\vec{X} \sim \scrP^n} \Biggl[
    e^{\epsilon} \sum_{\pi \in \vec{\Pi}(\alpha)} \Pr(\Pi = \pi \mid \vec{X}) \ind{t \leq t^\star} 
    \Delta_{t^\star} + 2 \Delta^\star \vec{\delta}(t)
    \Biggr] \label{eqn:lem-dp-ps-low-sens-indep} \\
    &\quad \leq e^{\epsilon} \max_{t \in \irange{n_0, n}} t \Delta_t \Pr(\Pi \in \vec{\Pi}(\alpha)) 
    + 2 \Delta^\star \tilde{\delta} \label{eqn:lem-dp-ps-low-sens-bounded}
  \end{align}
  where \eqref{eqn:lem-dp-ps-low-sens-same-dist} follows since $(\vec{X}, X')$ and $(\replace(\vec{X}, t, X'), X_{t})$ 
  have the same distribution, and
  \eqref{eqn:lem-dp-ps-low-sens-indep} follows since $X'$ is independent of $\Pi$ and 
  $\E_{X \sim \scrP}[v(q^\star, t^\star, \replace(\vec{X}, t, X))] = v(q^\star, t^\star, \vec{X}_{\irange{t - 1}})$.
  
  Subtracting $\sum_{\pi \in \vec{\Pi}(\alpha)} t^\star \Delta_{t^\star}$ 
  from both sides of inequality \eqref{eqn:lem-dp-ps-low-sens-bounded}, we have
  \begin{align}
    &\sum_{t = 1}^{n} \E_{\vec{X} \sim \scrP^n} \left[
    \sum_{\pi \in \vec{\Pi}(\alpha)} \Pr(\Pi = \pi \mid \vec{X}) \ind{t \leq t^\star} \left(
    v(q^\star, t^\star, \vec{X}_{\irange{t}}) - v(q^\star, t^\star, \vec{X}_{\irange{t - 1}}) 
    \right)
    \right] \\
    &\quad \leq \left(e^{\epsilon} \max_{t \in \irange{n_0, n}} t \Delta_t - \min_{t \in \irange{n_0,n}} t 
    \Delta_t\right) \Pr(\Pi \in \vec{\Pi}(\alpha)) 
    + 2 \Delta^\star \tilde{\delta}.
  \end{align}
  
  Now fix 
  $\alpha = \euler^{\epsilon} \max_{t \in \irange{n_0, n}} t \Delta_t - \min_{t' \in \irange{n_0, n}} t' \Delta_{t'} + 
  4 c n \max_{t'' \in \irange{n_0, n}} \Delta_{t''}$. 
  Suppose $\Pr(\abs{q^\star(\scrQ_\Pi^{t^\star}) - q^\star(\scrP^{t^\star})} > \alpha) > \frac{1}{c}$. 
  Then it must be that either $\Pr(q^\star(\scrQ_\Pi^{t^\star}) - q^\star(\scrP^{t^\star}) > \alpha) > 
  \frac{1}{2c}$ 
  or $\Pr(q^\star(\scrP^{t^\star}) - q^\star(\scrQ_\Pi^{t^\star}) > \alpha) > \frac{1}{2c}$. 
  Without loss of generality, assume 
  \begin{equation}
    \Pr(q^\star(\scrQ_\Pi^{t^\star}) - q^\star(\scrP^{t^\star}) > \alpha) = \Pr(\Pi \in \vec{\Pi}(\alpha)) > 
    \frac{1}{2c}.
  \end{equation}
  But this leads to a contradiction since 
  \begin{align}
    &\alpha \Pr(\Pi \in \vec{\Pi}(\alpha))  \\
    &< \sum_{\pi \in \vec{\Pi}(\alpha)} \Pr(\Pi = \pi) \left(
    q^\star(\scrQ_\pi^{t^\star}) - q^\star(\scrP^{t^\star})
    \right) \\
    &= \E_{\vec{X} \sim \scrP^n} \left[
    \sum_{\pi \in \vec{\Pi}(\alpha)} \Pr(\Pi = \pi \mid \vec{X}) \left(
    q^\star(\vec{X}_{\irange{t^\star}}) - q^\star(\scrP^{t^\star})
    \right)
    \right] \\
    &= \sum_{t = 1}^{n} \E_{\vec{X} \sim \scrP^n} \left[
    \sum_{\pi \in \vec{\Pi}(\alpha)} \Pr(\Pi = \pi \mid \vec{X}) \ind{t \leq t^\star} \left(
    v(q^\star, t^\star, \vec{X}_{\irange{t}}) - v(q^\star, t^\star, \vec{X}_{\irange{t - 1}})
    \right)
    \right] \\
    &\leq \left(
    \euler^{\epsilon} \max_{t \in \irange{n_0, n}} t \Delta_t - \min_{t' \in \irange{n_0,n}} t' \Delta_{t'} 
    \right) \Pr(\Pi \in \vec{\Pi}(\alpha)) + 2 \Delta^\star \tilde{\delta} \\
    &\leq \Pr(\Pi \in \vec{\Pi}(\alpha)) \left(
    \euler^{\epsilon} \max_{t \in \irange{n_0, n}} t \Delta_t - \min_{t' \in \irange{n_0, n}} t' \Delta_{t'} 
    + 4 c \Delta^\star \tilde{\delta}  
    \right)
  \end{align}
\end{proof}

\begin{lemma}[Lemma 21,~\citealp{jung2020new}] \zlabel{lem:dp}
  If $\sinteract(\blank; \blank, \mech)$ is $(\epsilon, \vec{\delta})$-differentially private, then for any event 
  $E \in \scrT$, any index $t \in \irange{n}$ and value $x \in \scrX$:
  \begin{equation}
    \Pr_{\vec{X} \sim \scrP^n, \Pi \sim \sinteract(\replace(\vec{X}, t, x))} [\Pi \in E] 
    \leq \euler^\epsilon \Pr_{\vec{X} \sim \scrP^n, \Pi \sim \sinteract(\vec{X})} [\Pi \in E] + \vec{\delta}(t)
  \end{equation}
  where $\replace(\vec{X}, t, x)$ is the dataset obtained from $\vec{X}$ by replacing the $t$-th data point with $x$.
\end{lemma}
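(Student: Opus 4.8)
The plan is to reduce the statement to the defining inequality of $(\epsilon,\delta)$-DP applied pointwise to each realization of the growing dataset, and then average over the data distribution. First I would fix the event $E \subseteq \scrT$, the index $t \in \irange{n}$, and the value $x \in \scrX$, and take an arbitrary realization $\vec{x} \in \scrX^n$. The datasets $\replace(\vec{x}, t, x)$ and $\vec{x}$ agree in every coordinate except possibly the $t$-th, so they are either identical (when $x = x_t$) or neighbors in the sense of \zcref{def:neighboring}. If they are identical, the distribution of $\Pi$ is the same under $\sinteract(\replace(\vec{x}, t, x))$ and $\sinteract(\vec{x})$, so the desired inequality holds trivially because $\euler^{\epsilon} \geq 1$ and $\delta \geq 0$. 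If they are neighbors, \zcref{def:approx-dp} applied with the event $E$ gives
\begin{equation}
  \Pr_{\Pi \sim \sinteract(\replace(\vec{x}, t, x))}(\Pi \in E)
    \leq \euler^{\epsilon} \Pr_{\Pi \sim \sinteract(\vec{x})}(\Pi \in E) + \delta,
\end{equation}
which also holds in the identical case. Hence the inequality holds for every $\vec{x} \in \scrX^n$.

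Next I would take the expectation over $\vec{X} \sim \scrP^n$ of both sides. Since $\scrX$ is finite, this expectation is a finite convex combination with weights $\Pr(\vec{X} = \vec{x})$, so it preserves the inequality and distributes over the affine right-hand side. Identifying $\E_{\vec{X} \sim \scrP^n}[\Pr_{\Pi \sim \sinteract(\replace(\vec{X}, t, x))}(\Pi \in E)]$ with $\Pr_{\vec{X} \sim \scrP^n, \Pi \sim \sinteract(\replace(\vec{X}, t, x))}(\Pi \in E)$, and similarly for the term involving $\sinteract(\vec{X})$, the averaged inequality is exactly the claimed bound.

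The only point needing care --- bookkeeping rather than a real obstacle --- is checking that $\replace(\vec{x}, t, x)$ and $\vec{x}$ are covered by the neighboring relation of \zcref{def:neighboring} for every realization, so that the pointwise DP guarantee can be invoked uniformly before averaging; the degenerate case $x = x_t$ must be dispatched separately because \zcref{def:neighboring} asks the two datasets to differ on \emph{precisely} one point. Everything else is just averaging a valid inequality over the data distribution, which introduces no new difficulty.
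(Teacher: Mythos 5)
Your proposal is correct and takes essentially the same approach as the paper: expand the joint probability as a convex combination over realizations $\vec{x}$, apply the $(\epsilon,\delta)$-DP inequality pointwise to the pair $(\replace(\vec{x},t,x), \vec{x})$, and resum. You are slightly more careful than the paper in explicitly dispatching the degenerate case $x = x_t$ (where the two datasets coincide rather than being neighbors under \zcref{def:neighboring}), but this is a minor bookkeeping point the paper handles implicitly.
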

\begin{proof}
  This follows from expanding the definitions
  \begin{align}
    \Pr_{\vec{X} \sim \scrP^n, \Pi \sim \sinteract(\replace(\vec{X}, t, x))} [\Pi \in E] 
    &= \sum_{\vec{x} \in \scrX^n} 
    \Pr_{\vec{X} \sim \scrP^n} [\vec{X} = \vec{x}] \Pr_{\Pi \sim \sinteract(\replace(\vec{x}, t, x))} [\Pi \in E] \\
    &\leq \sum_{\vec{x} \in \scrX^n} 
    \Pr_{\vec{X} \sim \scrP^n} [\vec{X} = \vec{x}] 
    \left(\euler^{\epsilon} \Pr_{\Pi \sim \sinteract(\vec{x})} [\Pi \in E]  + \vec{\delta}(t) \right) \label{eqn:lem21-dp} \\
    &= \euler^\epsilon \Pr_{\vec{X} \sim \scrP^n, \Pi \sim \sinteract(\vec{X})} [\Pi \in E] + \vec{\delta}(t)
  \end{align}
  where \eqref{eqn:lem21-dp} follows from the definition of differential privacy.
\end{proof}

\section{Proofs for \texorpdfstring{\zcref{sec:application}}{Section~\ref{sec:application}}} 
\zlabel{app:proofs-application}

\subsection{Generalization Guarantees Assuming Fixed Privacy Parameters}

\lemgaussmechdp*
\begin{proof}
  We begin by bounding the privacy of the interaction using zero-concentrated differential privacy (zCDP), as 
  defined in \zcref{def:zcdp}.
  When releasing an estimate to a single statistical query in round $\tau$, the Gaussian mechanism satisfies 
  $\vec{\rho}$-zCDP with $\vec{\rho}(t) = \ind{t \leq \tau} / 2 \sigma_\tau^2 \tau^2$, where we have used 
  $1 / \tau^2$ as an upper bound on the sensitivity of the query \citep[Proposition~1.6]{bun2016concentrated}.
  By post-processing (see \zcref{thm:zcdp-post-processing}), clipping the output of the Gaussian mechanism does not 
  accrue an additional privacy loss, nor does the analyst's choice of the next query conditioned on the previous 
  releases. 
  Now by advanced composition (see \zcref{thm:zcdp-composition}), the interaction satisfies $\vec{\rho}$-zCDP with 
  $\vec{\rho}(t) = \sum_{\tau = n_0}^{n} k_\tau \ind{t \leq \tau} / 2 \sigma_\tau^2 \tau^2$. 
  We note that in order to apply this theorem, we have relied on the fact that the privacy parameters are non-adaptive, 
  which is true when the $k_t$'s are fixed in advance.  
  Finally we convert from $\vec{\rho}$-zCDP to $(\epsilon, \vec{\delta})$-DP using 
  \zcref{cor:zcdp-approx-dp-conversion}.
\end{proof}

\lemgaussmechacc*
\begin{proof}
  Let $k_t$ denote the number of queries asked in round $t$.
  Let $j \in \{1, \ldots, k_t\}$ index the queries asked in round $t$ and 
  $Z_{t,j} \overset{\mathrm{iid.}}{\sim} \operatorname{Normal}(0, \sigma_t)$ denote the Gaussian noise added by the 
  mechanism to the $j$-th query in round $t$. 
  
  We begin with the observation that
  \begin{align}
    \Pr \Bigl( \max_{t \in \irange{n_0, n}} \max_{j \in \irange{k_t}} (-Z_{t,j} - \alpha_t) \geq 0 \Bigr)
    &= 1 - \Pr\Bigl( \max_{t \in \irange{n_0, n}} \max_{j \in \irange{k_t}} (- Z_{t,j} - \alpha_t) < 0 \Bigr) \\
    &= 1 - \prod_{t = n_0}^{n} \prod_{j = 1}^{k_t} \Pr(Z_{t,j} \geq -\alpha_t) \\
    &= 1 - \prod_{t = n_0}^{n} \prod_{j = 1}^{k_t} \left( 
    1 - \frac{1}{2} \erfc\left(\frac{\alpha_t}{\sqrt{2} \sigma_t}\right) 
    \right) \\
    &= 1 - \left(1 - \frac{1}{2} \erfc\left(\frac{\alpha_t}{\sqrt{2} \sigma_t}\right)
    \right)^k, \label{eqn:lem-gaus-acc-inter}
  \end{align}
  where the last equality follows from the fact that $\alpha_t / \sigma_t$ is constant with respect to 
  $t$. 
  By symmetry, this equality also holds under the replacement $Z_{t,j} \to -Z_{t,j}$.
  
  Conditioning on the number of queries $k_t$ asked in each round $t$, we have with respect to the joint distribution 
  on the dataset $\vec{X}$ and transcript $\Pi$ that
  \begin{align}
    &\Pr \Biggl( 
    \bigcup_{t = n_0}^{n} \bigcup_{(q,r) \in \Pi_t} \left\{ \abs{q(\vec{X}_{\irange{t}}) - r} \geq \alpha_t \right\} 
    \Bigg| 
    \bigcap_{t = n_0}^{n} \{\abs{\Pi_t} = k_t\}
    \Biggr) \\
    &\quad = \Pr \Bigl(
    \max_{t \in \irange{n_0, n}} \max_{j \in \irange{k_t}} \left\{ \abs{q_{t,j}(\vec{X}_{\irange{t}}) - 
    \clamp_{[0,1]}(q_{t,j}(\vec{X}_{\irange{t}}) + Z_{t,j})} - 
    \alpha_t 
    \right\} \geq 0 
    \Bigr) \\
    &\quad \leq \Pr \Bigl(
    \max_{t \in \irange{n_0, n}} \max_{j \in \irange{k_t}} \left\{ \abs{Z_{t,j}} - \alpha_t \right\} \geq 0 
    \Bigr) \\
    &\quad \leq \Pr \Bigl(
    \max_{t \in \irange{n_0, n}} \max_{j \in \irange{k_t}} \left\{ -Z_{t,j} - \alpha_t \right\} \geq 0 
    \Bigr) + \Pr \Bigl(
    \max_{t \in \irange{n_0, n}} \max_{j \in \irange{k_t}} \left\{ Z_{t,j} - \alpha_t \right\} \geq 0 
    \Bigr) \\
    &\quad = 2 \left(
    1 - \left(1 - \frac{1}{2} \erfc \mathopen{} \left(\frac{\alpha_t}{\sqrt{2} \sigma_t}\right)\right)^{k}
    \right), \label{eqn:lem-gaus-acc}
  \end{align}
  where the last line follows from \eqref{eqn:lem-gaus-acc-inter}. 
  Note that the bound only depends on the total number of queries $k$, not the number of queries asked at each 
  time step $k_t$, thus it serves as a bound on the probability conditioned on the event $\sum_{t = n_0}^{n} 
  \abs{\Pi_t} = k$. 
  The result follows by setting \eqref{eqn:lem-gaus-acc} equal to $\beta$ and solving for $\alpha_t / \sqrt{2} \sigma_t$.
\end{proof}

\thmgausmechdistacc*
\begin{proof}
  Combining \zcref{lem:gauss-mech-dp,lem:gauss-mech-acc} and \zcref{thm:transfer-dp} implies the 
  clipped Gaussian mechanism is $(\alpha', \beta')$-distributionally accurate 
  for $\beta' = \beta / d + 1 / c$ and 
  \begin{equation}
    \alpha' = \sqrt{2} \sigma \erfc^{-1} \mathopen{} \left(\frac{\beta}{k}\right) + \euler^\epsilon - 1 
      + \frac{2 c \sum_{\tau = 1}^{n} \vec{\delta}(\tau)}{n_0} + d
  \end{equation}
  for any $c, d > 0$ and $0 < \beta < 1$. 
  We select the free parameters to minimize $\alpha'$. 
  Eliminating $c$ using the constraint $\beta' = \beta / d + 1 / c$ and minimizing with respect to $d$ 
  analytically gives:
  \begin{equation}
    \alpha' = \sqrt{2} \sigma \erfc^{-1} \mathopen{} \left(\frac{\beta}{k}\right) + \euler^{\epsilon} - 1 
      + \frac{2 \sum_{\tau = 1}^{n} \vec{\delta}(\tau)}{n_0 \beta'} + \frac{\beta}{\beta'} + \frac{2}{\beta'} \sqrt{\frac{2 \beta \sum_{\tau = 1}^{n} \vec{\delta}(\tau)}{n_0}}
  \end{equation}
  Minimizing this expression with respect to the remaining free parameters gives the required result.
\end{proof}

\subsection{Generalization Guarantees Assuming Adaptive Privacy Parameters} \label{app:application-fully-adaptive}

In this appendix, we instantiate generalization guarantees for the clipped Gaussian mechanism where we allow the 
analyst to \emph{adaptively} select how many queries to ask in each round. 
This is in contrast with the results of \zcref{sec:application-guarantee}, where we assume the number of queries asked 
in each round is \emph{fixed} before interacting with the data. 
To support a fully adaptive analyst, we rely on a \emph{privacy filter}, which provides differential privacy guarantees 
when both the mechanisms and privacy parameters are selected adaptively. 
Generally speaking, a privacy filter is an algorithm that continually monitors the privacy parameters of mechanisms as 
they are composed. 
At any point, it can force the composition to terminate to ensure it satisfies a pre-specified level of privacy. 

We use a privacy filter proposed by \citet{whitehouse2023fully} for approximate zero-concentrated differential 
privacy (zCDP).
It achieves the same rates as advanced composition~\citep{bun2016concentrated} assuming the privacy level is 
specified upfront. 
The filter does not support non-uniform privacy accounting---where the privacy loss is estimated separately for 
each data point \slash individual. 
We therefore resort to uniform accounting, which results in looser generalization bounds. 
We expect the filter could be adapted to support non-uniform privacy accounting, in a similar way to the R\'enyi 
filter proposed by \citet{feldman2021individual}. 

To incorporate \citeauthor{whitehouse2023fully}'s privacy filter for an adaptive analysis using the clipped Gaussian 
mechanism, we require additional monitoring as shown in \zcref{alg:filter}. 
Lines~\ref{alg-line:init-rho} and~\ref{alg-line:update-rho} track the running privacy level $\bar{\rho}$ using the same 
additive rule as advanced composition for zCDP. 
If releasing a response to a query would not exceed the target privacy level (line~\ref{alg-line:privacy-test}), 
then a response to the query is released (line~\ref{alg-line:release}) and the analysis continues, 
otherwise the mechanism is terminated (line~\ref{alg-line:terminate}).

\begin{algorithm}[H]
  \caption{Composition of Clipped Gaussian Mechanisms with zCDP Privacy Filter}
  \zlabel{alg:filter}
  \begin{algorithmic}[1]
    \Statex \textbf{Input:} target privacy level $\rho$
    \State $\bar{\rho} \gets 0$ \label{alg-line:init-rho}
    \For {Round $t \in \irange{n_0,n}$}
      \While {Analyst has another query $q$}
        \State $\bar{\rho} \gets \bar{\rho} + \frac{1}{2\sigma_t^2 t^2}$ \label{alg-line:update-rho}
        \If {$\bar{\rho} \leq \rho$} \label{alg-line:privacy-test}
          \State Sample noise: $z \sim \normal(0, \sigma_t^2)$ 
          \State Return response to query: $r \gets \clamp_{[0,1]}(q(\vec{x}_{\irange{t}}) + z)$ 
          \label{alg-line:release}
        \Else 
          \State TERMINATE \label{alg-line:terminate}
        \EndIf
      \EndWhile
    \EndFor
  \end{algorithmic}
\end{algorithm}

\begin{lemma} \zlabel{lem:gauss-mech-dp-adaptive}
  Consider an interaction $\sinteract(\cdot; \cdot, \mech)$ where $\mech$ is a composition of clipped Gaussian 
  mechanisms tracked by a privacy filter that satisfies $\rho$-zCDP, as described in \zcref{alg:filter}. 
  Then the interaction satisfies $\rho$-zCDP, with the possibility that it may terminate early. 
  It also satisfies $(\epsilon, \delta)$-DP for any $\epsilon \geq 0$ with 
  $\delta \leq \inf_{\gamma \in (1, \infty)} \euler^{(\gamma - 1)(\gamma \rho - \epsilon)} (1 - \gamma^{-1})^{\gamma} / (\gamma - 1)$.
\end{lemma}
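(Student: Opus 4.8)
The plan is to mirror the proof of \zcref{lem:gauss-mech-dp}, replacing its non-adaptive advanced-composition step with a fully adaptive composition argument based on the privacy filter of \citet{whitehouse2023fully}, and then to reuse verbatim the zCDP-to-$(\epsilon,\delta)$-DP conversion. Concretely, I would first fix a pair of neighboring growing datasets $\vec{x},\vec{x}'\in\scrX^n$ (\zcref{def:neighboring}) and regard the interaction of \zcref{alg:interaction}, instantiated with the mechanism of \zcref{alg:filter}, as a single stream of adaptively chosen Gaussian releases. Each release answers a statistical query submitted in some round $t$ using the snapshot $\vec{x}_{\irange{t}}$; since such a query has sensitivity at most $1/t$ on a size-$t$ snapshot, the unclipped Gaussian release with standard deviation $\sigma_t$ is $\tfrac{1}{2\sigma_t^2 t^2}$-zCDP \citep[Proposition~1.6]{bun2016concentrated}. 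Clipping to $[0,1]$ is post-processing, and so is the analyst's choice of the next query (and of $\sigma_t$ and the round schedule) conditioned on past releases and on the public count $t$; hence none of these steps increases the zCDP loss \citep[Lemma~1.8]{bun2016concentrated}. The per-release parameter $\tfrac{1}{2\sigma_t^2 t^2}$ is exactly the quantity \zcref{alg:filter} adds to the accumulator $\bar\rho$ on line~\ref{alg-line:update-rho}, and line~\ref{alg-line:privacy-test} halts the stream before $\bar\rho$ would exceed the target $\rho$. Invoking the approximate-zCDP privacy filter of \citet{whitehouse2023fully}, which certifies that such a filtered, adaptively composed stream is $\rho$-zCDP even when the base mechanisms and their privacy parameters are selected adaptively, gives that $\sinteract(\cdot;\cdot,\mech)$ is $\rho$-zCDP, with the caveat that it may stop early once the budget is exhausted.

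Second, I would convert $\rho$-zCDP to $(\epsilon,\delta)$-DP exactly as in \zcref{lem:gauss-mech-dp}. Being $\rho$-zCDP, the interaction has R\'enyi divergence at most $\alpha\rho$ at every order $\alpha>1$ between neighboring inputs, i.e.\ it is $(\alpha,\alpha\rho)$-R\'enyi DP; applying the R\'enyi-DP-to-DP conversion of \citet[Corollary~13]{canonne2020discrete} at order $\alpha$ yields $(\epsilon,\delta_\alpha)$-DP with $\delta_\alpha=\tfrac{\euler^{(\alpha-1)(\alpha\rho-\epsilon)}}{\alpha-1}(1-\tfrac1\alpha)^{\alpha}\le\tfrac{\euler^{(\alpha-1)(\alpha\rho-\epsilon)-1}}{\alpha-1}$, using $(1-1/\alpha)^{\alpha}\le\euler^{-1}$. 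Taking the infimum over $\alpha\in(1,\infty)$ gives \eqref{eqn:gauss-mech-delta-adaptive}, so the interaction satisfies $(\epsilon,\delta)$-DP in the sense of \zcref{def:approx-dp}.

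The main obstacle is the first part: confirming that the protocol of \zcref{alg:interaction} fits the adaptive-composition model the \citet{whitehouse2023fully} filter requires. One must check that the ``state'' on which the filter conditions at each step---past queries, past noisy responses, the current round index $t$, and the arrival of new data---is admissible, in that the analyst and the round schedule are post-processing of previously released outputs and of the public count $t$ rather than of the private data, and that each step genuinely contributes the zCDP parameter of a valid base mechanism on neighboring size-$t$ snapshots under \zcref{def:neighboring}. Once this bookkeeping is settled, both conclusions follow from the cited results, and the $(\epsilon,\delta)$ computation is identical to that in \zcref{lem:gauss-mech-dp}.
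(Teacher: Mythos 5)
Your proposal follows the paper's proof essentially step for step: you invoke \citet[Proposition~1.6]{bun2016concentrated} for the per-query zCDP cost $\tfrac{1}{2\sigma_t^2 t^2}$ (using the $1/t$ sensitivity bound for statistical queries on a size-$t$ snapshot), treat clipping and adaptive query selection as zCDP post-processing, apply the fully adaptive zCDP privacy filter of \citet[Theorem~1]{whitehouse2023fully} to conclude $\rho$-zCDP, and convert to $(\epsilon,\delta)$-DP via \citet[Corollary~13]{canonne2020discrete}. Your extra expansion of the conversion step (including the bound $(1-1/\alpha)^{\alpha}\le\euler^{-1}$) and your explicit caveat about checking that \zcref{alg:interaction} fits the filter's adaptive-composition model are harmless elaborations of what the paper cites directly, so the argument is correct and matches the paper's route.
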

\begin{proof}
  We recall that the Gaussian mechanism satisfies $1 / (2 \sigma_t^2 t)$-zCDP for a statistical query with 
  sensitivity $\Delta \leq 1 / t$ \citep[Proposition~1.6]{bun2016concentrated}. 
  By post-processing for zCDP, clipping the result of the Gaussian mechanism and selecting a query as a function 
  of the result (and past results) does not accrue an additional cost to privacy. 
  Hence, by fully adaptive composition for the zCDP privacy filter \citep[Theorem~1]{whitehouse2023fully}, the 
  interaction satisfies $\rho$-zCDP.
  Finally, we convert from $\rho$-zCDP to $(\epsilon, \delta)$-DP to obtain the final 
  result~\citep[Corollary~13]{canonne2020discrete}.
\end{proof}

Combining \zcref{lem:gauss-mech-dp-adaptive} with \zcref{lem:gauss-mech-acc} and \zcref{thm:transfer-dp} 
yields the following generalization guarantee. 
We note that this essentially matches the guarantee for the more constrained setting where the number of queries in 
each round is fixed upfront (\zcref{thm:gaus-mech-dist-acc}).
\begin{theorem} \zlabel{thm:gaus-mech-dist-acc-adaptive}
  Consider an interaction $\sinteract(\cdot; \cdot, \mech)$ where $\mech$ is a composition of clipped Gaussian 
  mechanisms tracked by a zCDP privacy filter that satisfies $(\epsilon, \delta)$-DP.
  Suppose $\mech$ answers $k$ statistical queries without terminating and assume $\sigma_t = \sigma > 0$.
  Then $\mech$ is $(\alpha', \beta')$-distributionally accurate for $k$ statistical queries with
  \begin{align}
    \alpha' = \min_{(\sigma, \beta, \epsilon) \in \Theta}
    \sqrt{2} \sigma \erfc^{-1} \mathopen{} \left(\frac{\beta}{k}\right) + \euler^{\epsilon} - 1 
    + \frac{2 n \delta}{n_0 \beta'} + \frac{\beta}{\beta'} + \frac{2}{\beta'} \sqrt{\frac{2 \beta n \delta}{n_0}}
  \end{align}
  where $\Theta$ is defined in \zcref{thm:gaus-mech-dist-acc} and $\delta$ is defined in 
  \zcref{lem:gauss-mech-dp-adaptive}.
\end{theorem}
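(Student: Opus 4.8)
The plan is to follow the proof of \zcref{thm:gaus-mech-dist-acc} almost verbatim, replacing the non-adaptive privacy accounting of \zcref{lem:gauss-mech-dp} with the fully adaptive accounting of \zcref{lem:gauss-mech-dp-adaptive}. First I would invoke \zcref{lem:gauss-mech-dp-adaptive} to conclude that the interaction $\sinteract(\cdot;\cdot,\mech)$ is $(\epsilon,\delta)$-DP with $\delta$ bounded as in \eqref{eqn:gauss-mech-delta-adaptive}; this holds for every adaptive analyst, independently of how the per-round query counts are chosen, because the privacy filter forces early termination as soon as the running zCDP budget would exceed $\rho$.

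Next I would establish $(\{\alpha_t\},\beta)$-snapshot accuracy with $\alpha_t = \sqrt{2}\,\sigma s_t\,\erfc^{-1}(\beta/k)$, reusing \zcref{lem:gauss-mech-acc}. The key observation, already present in the proof of that lemma, is that the snapshot-accuracy failure probability depends only on the total number of answered queries $k$ and not on the individual counts $k_t$, and that the hypothesis $\sigma_t \propto \alpha_t$ holds here since $\alpha_t/\sigma_t = \sqrt{2}\,\erfc^{-1}(\beta/k)$ is constant in $t$. The one subtlety is the conditioning: with a privacy filter the number of answered queries is itself random, so ``$\mech$ answers $k$ queries without terminating'' is an event, whereas \zcref{thm:transfer-dp} is stated for an unconditioned interaction. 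I would resolve this by coupling to a truncated analyst $\analyst'$ that asks the same queries as $\analyst$ but stops after $k$ of them; against $\analyst'$ the filter never fires, so the truncated interaction is $(\epsilon,\delta)$-DP by \zcref{lem:gauss-mech-dp-adaptive} and $(\{\alpha_t\},\beta)$-snapshot accurate by \zcref{lem:gauss-mech-acc}, and it agrees with the original interaction on the event in question.

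Finally I would apply \zcref{thm:transfer-dp} to the truncated interaction to obtain $(\{\alpha_t'\},\beta')$-distributional accuracy with $\alpha_t' = \alpha_t + \euler^{\epsilon} - 1 + 2cn/n_0 + d$ and $\beta' = \beta/d + \delta/c$ for arbitrary $c,d>0$, and then optimize the free constants exactly as in \zcref{thm:gaus-mech-dist-acc}: eliminate $c$ via $\beta' = \beta/d + \delta/c$, minimize over $d$ in closed form to produce the terms $\tfrac{2\delta n}{n_0\beta'} + \tfrac{\beta}{\beta'} + \tfrac{2}{\beta'}\sqrt{2\delta\beta n/n_0}$, and minimize over $\sigma,\beta,\epsilon$. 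I expect the main obstacle to be making the conditioning rigorous --- that is, justifying the truncation coupling so that DP, snapshot accuracy, and the transfer theorem are all applied to one and the same well-defined interaction --- since the remainder is an unchanged reprise of the static-parameter argument.
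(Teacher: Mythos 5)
Your proposal is correct and follows the same route as the paper's one-line proof, which simply reprises the argument of \zcref{thm:gaus-mech-dist-acc} with \zcref{lem:gauss-mech-dp-adaptive} substituted for \zcref{lem:gauss-mech-dp} and then performs the identical elimination of $c$ and closed-form minimization over $d$. The truncation/coupling remark you add is a sensible flag of a detail the paper glosses over, but under the paper's intended reading ``answers $k$ queries without terminating'' is a standing restriction on the class of analysts considered rather than a random event to condition on, so \zcref{lem:gauss-mech-acc} (whose bound depends only on the total $k$) and \zcref{thm:transfer-dp} apply directly without the coupling.
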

\begin{proof}
  The proof is similar to the proof of \zcref{thm:gaus-mech-dist-acc}.
\end{proof}

\subsection{Generalization Guarantees using the Static Bound of \texorpdfstring{\citet{jung2020new}}{Jung et al. (2020)}} \zlabel{app:jlnrss-bound}

In this appendix, we derive a generalization guarantee for the setting considered in \zcref{sec:application-empirical} 
by composing a bound for static data due to \citet{jung2020new}. 
In doing so, we will obtain the bound that is labeled \textbf{JLNRSS} in the empirical results of 
\zcref{sec:application-empirical}.
For completeness, we begin by restating \citeauthor{jung2020new}'s result below, with some minor changes in notation.
In particular, we make the dependence of the bound on the dataset size $n$ and number of queries $k$ explicit, since 
we will apply the bound to multiple batches, each with a different value of $n$ and $k$.
\begin{theorem} \zlabel{thm:gauss-mech-dist-acc-jlnrss-static}
    Consider ADA on a static dataset of size $n$---i.e., an instantiation of \zcref{alg:interaction} with 
    $n_0 = n - 1$. 
    The clipped Gaussian mechanism can be used to answer $k$ statistical queries while satisfying 
    $(\alpha(n, k, \beta), \beta)$-distributional accuracy for any $\beta \in (0, 1)$ with
    \begin{equation}
      \alpha(n, k, \beta) = \min_{\sigma, \delta > 0} \left\{
        \sqrt{2} \sigma \erfc^{-1} \mathopen{} \left(\frac{\delta}{k}\right) 
        + \exp \left( \frac{k}{2 n^2 \sigma^2} + \sqrt{\frac{2 k}{n^2 \sigma^2} \log \left( \frac{\sqrt{\pi k}}{\sqrt{2} n \sigma \delta} \right)} \right) 
        - 1 + 6 \frac{\delta}{\beta}
        \right\}.
    \end{equation}
\end{theorem}
We note that an improved bound can be obtained by (1)~not constraining $\beta' = \delta$ and $c = d$ in the proof 
and (2)~using a tighter conversion from zCDP to approximate DP based on \citet{canonne2020discrete} in place of 
\citet{bun2016concentrated}. 
This improved bound is used in the approach labeled \textbf{JLNRSS+} in \zcref{sec:application-empirical}.

We now turn to the batched setting described in \zcref{sec:application-empirical}. 
Recall that the data arrives in $b$ batches, which we index by $\ell \in \irange{b}$. 
We let $n_\ell$ denote the size of the $\ell$-th data batch, and $k_\ell$ denote the number of queries asked by the 
analyst after the batch arrives. 
Since the $\ell$-th data batch is static while the analyst asks the $k_\ell$ queries, we can bound the worst-case 
distributional accuracy for the entire analysis by applying a static bound on each batch, and taking the union bound. 
\begin{proposition} \zlabel{prop:gauss-mech-dist-acc-jlnrss}
  Consider ADA on growing data in the batched setting. 
  The clipped Gaussian mechanism can be used to answer $k$ statistical queries while satisfying 
  $(\alpha', \beta')$-distributional accuracy for any $\beta' \in (0, 1)$ with 
  $\alpha' = \max_{\ell \in \irange{b}} \alpha(n_\ell, k_\ell, \beta'/b)$, where $\alpha(\cdot, \cdot, \cdot)$ is as 
  defined in \zcref{thm:gauss-mech-dist-acc-jlnrss-static}.
\end{proposition}
\begin{proof}
  Let $\vec{X} \sim \scrP^n$ and $\Pi \sim \sinteract(\vec{X}; \analyst, \mech)$. 
  It is convenient to refer to the queries and responses in the transcript $\Pi$ using two indices:
  the batch $\ell \in \irange{b}$ when the query was asked and the number $j \in \irange{k_\ell}$ of the query within 
  batch $\ell$. 
  Adopting these indices, we say that the clipped Gaussian mechanism is $(\alpha', \beta')$-distributionally accurate 
  if with probability $1 - \beta'$ over the randomness in $\vec{X}$ and $\Pi$, we have that 
  $\max_{\ell \in \irange{b}} \max_{j \in \irange{k_\ell}} \abs{R_{\ell,j} - Q_{\ell,j}(\scrP^{n_\ell})} \geq \alpha'$
  for any data distribution $\scrP$ and analyst $\analyst$. 

  Now we obtain an upper bound on the probability of interest by replacing the error $\alpha'$ with a lower bound for 
  each batch $\ell$, and finally applying the union bound:
  \begin{align*}
    &\Pr_{\vec{X} \sim \scrP^{n}, \Pi \sim \sinteract(\vec{X})} \Biggl(
        \max_{\ell \in \irange{b}} \max_{j \in \irange{k_\ell}} \abs{R_{\ell,j} - Q_{\ell,j}(\scrP^{n_\ell})} \geq \max_{\ell' \in \irange{b}} \alpha_{n_{\ell'}, k_{\ell'}}
      \Biggr) \\
    &= \Pr_{\vec{X} \sim \scrP^{n}, \Pi \sim \sinteract(\vec{X})} \Biggl(
        \max_{\ell \in \irange{b}} \left\{\max_{j \in \irange{k_\ell}} \abs{R_{\ell,j} - Q_{\ell,j}(\scrP^{n_\ell})} - \max_{\ell' \in \irange{b}} \alpha_{n_{\ell'}, k_{\ell'}} \right\} \geq 0
      \Biggr) \\
    &\leq \Pr_{\vec{X} \sim \scrP^{n}, \Pi \sim \sinteract(\vec{X})} \Biggl(
        \max_{\ell \in \irange{b}} \left\{\max_{j \in \irange{k_\ell}} \abs{R_{\ell,j} - Q_{\ell,j}(\scrP^{n_\ell})} - \alpha_{n_\ell, k_\ell} \right\} \geq 0 
      \Biggr) \\
    &= \Pr_{\vec{X} \sim \scrP^{n}, \Pi \sim \sinteract(\vec{X})} \Biggl(
        \bigcup_{\ell = 1}^{b} \left\{ \max_{j \in \irange{k_\ell}} \abs{R_{\ell,j} - Q_{\ell,j}(\scrP^{n_\ell})} \geq \alpha_{n_\ell, k_\ell} \right\} 
      \Biggr) \\
    &\leq b (\beta'/b).
  \end{align*}
\end{proof}

\subsection{Generalization Guarantees for Data Splitting} \zlabel{app:sample-split}

Data splitting is a simple method for mitigating the risk of overfitting when conducting an adaptive analysis. 
It involves randomly splitting an i.i.d.\ dataset into disjoint chunks, and using a fresh chunk whenever a step 
in the analysis depends on existing data. 
Data splitting has been used as a baseline in prior work for static data~\citep{bassily2016algorithmic,jung2020new,
rogers2020guaranteed} and can be adapted for growing data by splitting the data into chunks as data points arrive. 
There is a limitation in the growing setting: it may be necessary to delay a step in the analysis if sufficient 
data has not yet arrived to create a fresh chunk of the desired size. 
This is not a limitation of our approach (\zcref{alg:interaction}) which can respond to queries without delay at any 
time. 
For completeness, we provide a high probability worst-case generalization bound for data splitting below. 

\begin{proposition}
  Data splitting is $(\alpha, \beta)$-distributionally accurate when used to answer $k$ adaptive statistical queries 
  for any $\alpha, \beta \geq 0$ such that $\sum_{j = 1}^{k} \euler^{- 2 b_j \alpha^2} = \frac{\beta}{2}$, where $n_j$ 
  is the (predetermined) size of the split used to answer the $j$-th query.
  In particular, if a dataset of size $n$ is split evenly across the $k$ queries so that $n_j = n / k \in \naturals$ 
  then $n = \frac{k}{2 \alpha^2} \log \frac{2k}{\beta}$.
\end{proposition}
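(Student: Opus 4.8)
The plan is to exploit the defining property of data splitting: the chunk of data used to answer the $j$-th query (in order of submission) is disjoint from every chunk used to answer queries $1, \dots, j-1$. First I would introduce notation. Let $q_1, \dots, q_k$ be the statistical queries submitted over the course of the interaction in \zcref{alg:interaction}, reindexed by global submission order across rounds, and let $C_j$ denote the fresh, size-$n_j$ chunk of $\vec{X}$ assigned to $q_j$ (the size $n_j$ being fixed in advance; I write $n_j$ for the quantity denoted $b_j$ in the statement). The response is $R_j = \frac{1}{n_j}\sum_{X \in C_j} q_j(X) \in [0,1]$, so the estimates are automatically feasible. The observation that makes the argument clean is that for a statistical query $q_j(\scrP^t) = \E_{\vec{X}\sim\scrP^t}\bigl[\frac{1}{t}\sum_{\tau=1}^{t} q_j(X_\tau)\bigr] = \E_{X\sim\scrP}[q_j(X)] =: \mu_j$, which does not depend on the round $t$; hence distributional accuracy for query $j$ reduces to controlling $\abs{R_j - \mu_j}$.

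The key step is an independence argument. Because $q_j \sim \analyst(\Pi)$ is generated from the transcript recorded before $q_j$ is submitted, that transcript is a randomized function only of $C_1, \dots, C_{j-1}$ together with $\analyst$'s and $\mech$'s internal randomness, and since $X_1, \dots, X_n$ are i.i.d.\ and the chunks $C_1, \dots, C_j$ are disjoint, the random query $q_j$ is independent of $C_j$. Conditioning on any realization $q_j = q$, the response $R_j$ is therefore an average of $n_j$ i.i.d.\ $[0,1]$-valued random variables with mean $\mu_j = \E_{X\sim\scrP}[q(X)]$. (The possibility that the method must delay until $C_j$ has arrived affects only \emph{when} $R_j$ is returned, not this conditional distribution.) Hoeffding's inequality then yields $\Pr(\abs{R_j - \mu_j} \geq \alpha \mid q_j = q) \leq 2\euler^{-2 n_j \alpha^2}$, and since this bound is uniform over $q$ it holds unconditionally.

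Finally I would apply a union bound over the $k$ queries:
\[
  \Pr\Bigl( \bigcup_{j=1}^{k} \bigl\{ \abs{R_j - \mu_j} \geq \alpha \bigr\} \Bigr)
  \leq \sum_{j=1}^{k} 2\euler^{-2 n_j \alpha^2} = \beta,
\]
using the hypothesis $\sum_{j=1}^{k} \euler^{-2 n_j \alpha^2} = \beta/2$. Rewriting the double union in \zcref{def:distributional-acc} (over rounds $t$ and over $(Q,R) \in \Pi_t$) as a single union over the $k$ submitted queries, this is exactly $(\alpha,\beta)$-distributional accuracy. For the even split $n_j = n/k$, the hypothesis becomes $k\,\euler^{-2(n/k)\alpha^2} = \beta/2$; solving for $n$ gives $n = \frac{k}{2\alpha^2}\log\frac{2k}{\beta}$. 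I expect the only real subtlety to be stating the independence step rigorously within the growing-data interaction model of \zcref{alg:interaction} --- in particular, reindexing the adaptively-scheduled, round-structured queries into a single submission order and checking that each chunk is consumed exactly once --- after which the argument is just Hoeffding plus a union bound.
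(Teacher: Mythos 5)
Your proposal is correct and takes essentially the same route as the paper's own proof: reindex the adaptively scheduled queries by submission order, reduce distributional accuracy for statistical queries to controlling $\abs{R_j - \E_{X\sim\scrP}[q_j(X)]}$, apply Hoeffding's bound per query, and finish with a union bound. You are somewhat more explicit than the paper about the independence argument (that $q_j$ is a function only of earlier disjoint chunks and hence independent of $C_j$) and about why $q_j(\scrP^t)$ is constant in $t$, but the substance is the same.
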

\begin{proof}
  We write $(q_j, r_j)$ to refer to the $j$-th query-estimate pair in the flattened transcript.
  We also define $m_j = \sum_{j' = 1}^{j - 1} n_{j'}$ to be the number of data points used by the mechanism prior to 
  answering the $j$-th query.
  By the union bound and Hoeffding's bound we have
  \begin{align}
    &\Pr_{\vec{X} \sim \scrP^n, \Pi \sim \sinteract(\vec{X})} \Biggl( 
      \bigcup_{j = 1}^{k} \{\abs{r_j - q_j(\scrP)} \geq \alpha\} 
    \Biggr) \\ 
    &\quad = \Pr_{\vec{X} \sim \scrP^n, \Pi \sim \sinteract(\vec{X})} \Biggl( 
      \bigcup_{j = 1}^{k} \Biggl\{
        \abs*{\frac{1}{n_j} \sum_{t = m_j}^{m_j + n_j} q_j(X_t) - \E_{X \sim \scrP}[q_j(X)]} \geq \alpha 
      \Biggr\} 
    \Biggr) \\
    &\quad \leq \sum_{j = 1}^{k} \Pr_{\vec{X} \sim \scrP^n, \Pi \sim \sinteract(\vec{X})} \Biggl( 
      \abs*{\frac{1}{n_j} \sum_{t = m_j}^{m_j + n_j} \left\{q_j(X_t) - \E_{X \sim \scrP}[q_j(X)]\right\}} \geq 
      \alpha 
    \Biggr) \\
    &\quad \leq \sum_{j = 1}^{k} 2 \euler^{-2 b_j \alpha^2}
  \end{align}
\end{proof}

\section{Results for Non-uniform Privacy Parameters} \zlabel{app:non-uniform-privacy}

In this appendix, we prove key results for differential privacy with non-uniform privacy parameters. 
Although many of the results are identical to the uniform case, we were unable to find proofs in the literature. 
We begin by extending the definitions of approximate differential privacy (approx.\ DP) and zero-concentrated 
differential privacy (zCDP) to the non-uniform setting. 
Then we prove composition and post-processing for these definitions in \zcref{app:non-uniform-privacy-comp-pp}, and 
a conversion theorem from zCDP to approx.\ DP in \zcref{app:non-uniform-conversion}. 

Informally, differential privacy (DP) is a bound on how distinguishable the outputs of a randomized algorithm will be 
when run on two neighboring datasets. 
Ordinarily, the bound on distinguishability holds uniformly over all neighboring datasets, which means the level of 
privacy is the same for all records in the dataset. 
However, in some scenarios it may be tolerable for the privacy guarantee to vary non-uniformly over records---e.g., 
where individuals have different privacy preferences, or where privacy is permitted to decay as records age. 
Non-uniform privacy definitions have been studied using pure DP as a foundation, which is known as
\emph{personalized DP}~\citep{ebadi2015differential,jorgensen2015conservative}. 
Below, we extend this idea to approximate DP, by upgrading the $\delta$ parameter from a constant to a function 
$\vec{\delta}(\cdot)$ that varies for each record index.

\begin{definition}[Approximate DP] \zlabel{def:approx-dp}
  Let $\epsilon \geq 0$ and $\vec{\delta} \colon \irange{n} \to [0, 1]$.
  A randomized mechanism $\mech \colon \scrX^n \to \scrY$ satisfies $(\epsilon, \vec{\delta})$-differential privacy or 
  $(\epsilon, \vec{\delta})$-DP for short, if for all indices $i \in \irange{n}$, all pairs of neighboring 
  datasets $(\vec{x}, \vec{x}') \in \scrN_i$ differing on the $i$-th entry, and all measurable events 
  $E \subseteq \scrY$: 
  \begin{equation}
    \Pr \mathopen{} \left(\mech(\vec{x}) \in E\right)
      \leq \euler^{\epsilon} \Pr \mathopen{} \left(\mech(\vec{x}') \in E\right) + \vec{\delta}(i).
  \end{equation}
\end{definition}

Note that this definition depends on $\scrN_i$, the set of neighboring datasets that differ on the $i$-th record. 
One could consider \emph{unbounded} neighboring datasets, in which case $\scrN_i$ would consist of pairs of 
datasets $(\vec{x}, \vec{x}')$ such that $\vec{x}'$ can be obtained from $\vec{x}$ by adding or removing the $i$-th 
record. 
Alternatively, one could consider \emph{bounded} neighboring datasets, where the pairs of datasets 
$(\vec{x}, \vec{x}')$ in $\scrN_i$ are such that $\vec{x}'$ can be obtained from $\vec{x}$ by changing the $i$-th 
record.

Next, we define a non-uniform variant of zero-concentrated differential privacy (zCDP).
However, we must first define the privacy loss distribution, since it is used to measure indistinguishability 
for zCDP.
\begin{definition}[Privacy loss distribution] \label{def:priv-loss-dist}
  Let $P$ and $Q$ be probability distributions on $\scrY$.\footnote{%
    For simplicity we assume $\scrY$ is discrete, so that we don't have to worry about measure theory.
  } 
  Define $f_{P \| Q} \colon \scrY \to \reals$ such that $f_{P \| Q}(y) = \log (P(y) / Q(y))$. 
  The privacy loss random variable is given by $Z = f_{P \| Q}(Y)$ for $Y \gets P$. 
  The distribution of $Z$ is denoted by $\privloss{P}{Q}$.
\end{definition}
The standard definition of zCDP bounds the moment generating function of the privacy loss random variable 
$Z = \privloss{\mech(\vec{x})}{\mech(\vec{x}')}$ uniformly over pairs of neighboring datasets, in terms of a scalar 
$\rho$~\citep{bun2016concentrated}. 
We upgrade the scalar $\rho$ to a function $\vec{\rho}(\cdot)$ that varies for each record index. 
\begin{definition}[Zero-concentrated DP] \zlabel{def:zcdp}
  Let $\vec{\rho} \colon \irange{n} \to [0, \infty)$. 
  A randomized mechanism $\mech \colon \scrX^n \to \scrY$ satisfies $\vec{\rho}$-zero-concentrated differential privacy 
  or $\vec{\rho}$-zCDP for short, if for all indices $i \in \irange{n}$ and all pairs of neighboring datasets 
  $(\vec{x}, \vec{x}') \in \scrN_i$ that differ on the $i$-th entry, the privacy loss distribution 
  $\privloss{\mech(\vec{x})}{\mech(\vec{x}')}$ is well-defined and 
  \begin{align}
    \forall \tau \geq 0,  \E_{Z \gets \privloss{\mech(\vec{x})}{\mech(\vec{x}')}} \left(\exp(\tau Z)\right)
      \leq \exp(\tau (\tau + 1) \vec{\rho}(i)).
  \end{align}
\end{definition}

\subsection{Composition and post-processing} \zlabel{app:non-uniform-privacy-comp-pp}

We need a composition theorem for $\vec{\rho}$-zCDP to analyze the privacy of successive applications of the clipped 
Gaussian mechanism in \zcref{lem:gauss-mech-dp}.
We show that $\vec{\rho}$-zCDP composes in the obvious way: by adding the $\vec{\rho}$ privacy parameters pointwise.
\begin{theorem}[Composition for $\vec{\rho}$-zCDP] \zlabel{thm:zcdp-composition}
  Let randomized mechanism $\mech_1 : \scrX^\star \to \scrY_1$ satisfy $\vec{\rho}_1$-zCDP. 
  Let $\mech_2 : \scrX^\star \times \scrY_1 \to \scrY_2$  be such that, for all $y \in \scrY_1$, the restriction 
  $\mech_2(\cdot, y) : \scrX^\star \to \scrY_2$ satisfies $\vec{\rho}_2$-zCDP. 
  Define $\mech : \scrX^\star \to \scrY_1 \times \scrY_2$ such that 
  $Y_1 \gets \mech_1(\vec{x})$, $Y_2 | Y_1 \gets \mech_2(\vec{x}, Y_1)$ and $\mech(\vec{x}) = (Y_1, Y_2)$. 
  Then $\mech$ satisfies $\vec{\rho}$-zCDP with $\vec{\rho}(i) = \vec{\rho}_1(i) + \vec{\rho}_2(i)$ for all 
  $i \in \irange{n}$.
\end{theorem}
\begin{proof}
  We adapt the proof of \citet{steinke2022composition}.
  Fix $i \in \irange{n}$ and neighboring datasets $(\vec{x}, \vec{x}') \in \scrN_i$ that differ on the $i$-th entry. 
  Fix $\tau \geq 0$. 
  Let $Z \gets \privloss{\mech(\vec{x})}{\mech(\vec{x}')}$ where we conceal the dependence on $i$.
  We must prove $\E(\exp(\tau Z)) = \exp(\tau (\tau + 1) (\rho_1(i) + \rho_2(i)))$.

  The privacy loss distribution for $\mech$ can be decomposed as follows\footnote{%
    We again assume that $\scrY$ is discrete for simplicity, however the result holds more generally.
  }:
  \begin{align}
    f_{\mech(\vec{x}) \| \mech(\vec{x}')}(y_1, y_2) 
    &= \log \frac{\Pr (\mech(\vec{x}) = (y_1, y_2))}{\Pr (\mech(\vec{x}') = (y_1, y_2))} \\
    &= \log \frac{\Pr (\mech_1(\vec{x}) = y_1) \Pr (\mech_2(\vec{x}, y_1) = y_2)}
      {\Pr(\mech_1(\vec{x}') = y_1) \Pr(\mech_2(\vec{x}', y_1) = y_2)} \\
    &= \log \frac{\Pr(\mech_1(\vec{x}) = y_1)}{\Pr(\mech_1(\vec{x}') = y_1)} 
      + \log \frac{\Pr(\mech_2(\vec{x}, y_1) = y_2)}{\Pr(\mech_2(\vec{x}', y_1) = y_2)} \\
    &= f_{\mech_1(\vec{x}) \| \mech_1(\vec{x}')}(y_1) + f_{\mech_2(\vec{x}, y_1) \| \mech_2(\vec{x}', y_1)}(y_2).
  \end{align}
  Hence
  \begin{align}
    &\E_{Z \gets \privloss{\mech(\vec{x})}{\mech(\vec{x}')}} \mathopen{} \left(\exp(\tau Z)\right) \\
    &= \E_{Y_1 \gets \mech_1(\vec{x}), Y_2 \gets \mech_2(\vec{x}, Y_1)} \mathopen{} \left(
      \exp(\tau f_{\mech(\vec{x}) \| \mech(\vec{x}')}(Y_1, Y_2))
      \right) \\
    &= \E_{Y_1 \gets \mech_1(\vec{x})} \mathopen{} \left(\exp(\tau f_{\mech_1(\vec{x}) \| \mech_1(\vec{x}')}(Y_1)) 
      \E_{Y_2 \gets \mech_2(\vec{x}, Y_1)} (\exp(\tau f_{\mech_2(\vec{x}) \| \mech_2(\vec{x}')}(Y_2)))
    \right) \\
    &\leq \E_{Y_1 \gets \mech_1(\vec{x})} \mathopen{} \left(
      \exp(\tau f_{\mech_1(\vec{x}) \| \mech_1(\vec{x}')}(Y_1)) \sup_{y_1 \in \scrY_1}
        \E_{Y_2 \gets \mech_2(\vec{x}, y_1)} (\exp(\tau f_{\mech_2(\vec{x}) \| \mech_2(\vec{x}')}(Y_2))) 
    \right) \\
    &= \E_{Z_1 \gets \privloss{\mech_1(\vec{x})}{\mech_1(\vec{x}')}} \mathopen{} \left(\exp(\tau Z_1)\right) \cdot 
      \sup_{y_1 \in \scrY_1} 
        \E_{Z_2 \gets \privloss{\mech_2(\vec{x}, y_1)}{\mech_2(\vec{x}', y_1)}} \mathopen{} \left(\exp(\tau Z_2)\right) \\
    &\leq \exp(\tau(\tau + 1) \rho_1(i)) \cdot \exp(\tau(\tau + 1) \rho_2(i)) \\
    &= \exp(\tau(\tau + 1) (\rho_1(i) + \rho_2(i)))
  \end{align}
  as required.
\end{proof}

We require post-processing of $\vec{\rho}$-zCDP to perform privacy accounting of the entire interaction between 
the analyst and mechanism in \zref{lem:gauss-mech-dp}. 
In essence, we model the adversary as a post-processing operation applied to previous responses from the clipped 
Gaussian mechanism, which selects the next query. 
We demonstrate below that post-processing holds in the non-uniform setting.
\begin{theorem}[Post-processing for $\vec{\rho}$-zCDP] \zlabel{thm:zcdp-post-processing}
  Let $\mech_0 \colon \scrX^n \to \scrY$ be a randomized mechanism that satisfies $\vec{\rho}$-zCDP and let 
  $f \colon \scrX^n \to \scrZ$ be an arbitrary randomized mapping. 
  Define the post-processed mechanism $\mech \colon \scrX^n \to \scrZ$ such that $\mech(\vec{x}) = f(\mech_0(\vec{x}))$.
  Then $\mech$ also satisfies $\vec{\rho}$-zCDP.
\end{theorem}
\begin{proof}
  Fix $i \in \irange{n}$ and neighboring datasets $(\vec{x}, \vec{x}') \in \scrN_i$ that differ on the $i$-th entry. 
  Fix $\tau \geq 0$.
  Applying Lemma~20 of \citet{steinke2022composition}, we have
  \begin{align}
    \E_{Z \gets \privloss{\mech(\vec{x})}{\mech(\vec{x}')}} \mathopen{} \left(\exp(\tau Z)\right)
    \leq \E_{Z_0 \gets \privloss{\mech_0(\vec{x})}{\mech_0(\vec{x}')}} \mathopen{} \left(\exp(\tau Z_0)\right).
  \end{align}
  Now the right-hand side is $\leq \exp(\tau(\tau + 1) \vec{\rho}(i))$ since $\mech_0$ is $\vec{\rho}$-zCDP, as required.
\end{proof}

We also show that post-processing holds for non-uniform approximate DP. 
This result is used to prove generalization guarantees for minimization queries in \zcref{thm:transfer-min-q}, which 
can be regarded as post-processed low sensitivity queries.
\begin{theorem}[Post-processing for $(\epsilon, \vec{\delta})$-DP] \zlabel{thm:approx-dp-post-processing}
  Let $\mech_0 \colon \scrX^n \to \scrY$ be a randomized mechanism that satisfies $(\epsilon, \vec{\delta})$-DP 
  and let $f \colon \scrX^n \to \scrZ$ be an arbitrary randomized mapping. 
  Define the post-processed mechanism $\mech \colon \scrX^n \to \scrZ$ such that $\mech(\vec{x}) = f(\mech_0(\vec{x}))$.
  Then $\mech$ also satisfies $(\epsilon, \vec{\delta})$-DP.
\end{theorem}
\begin{proof}
  We adapt the proof of \citet{dwork2014algorithmic}.
  First consider a deterministic mapping $f$. 
  Fix $i \in \irange{n}$ and neighboring datasets $(\vec{x}, \vec{x}') \in \scrN_i$ that differ on the $i$-th entry. 
  Fix any event $E \subseteq \scrO$ and let $G = \{o \in \scrO : f(o) \in E\}$.
  We then have 
  \begin{align}
    \Pr(\mech(\vec{x}) \in E) &= \Pr(\mech_0(\vec{x}) \in G) \\
    &\leq \euler^{\epsilon} \Pr(\mech_0(\vec{x}') \in G) + \vec{\delta}(i) \\
    &= \euler^{\epsilon} \Pr(\mech(\vec{x}') \in E) + \vec{\delta}(i),
  \end{align}
  which completes the proof for a deterministic mapping. 
  To extend the result to a random mapping, we can write $f$ as a mixture of deterministic mappings. 
  The result then follows, since a mixture of $(\epsilon, \vec{\delta})$-DP mechanisms is also 
  $(\epsilon, \vec{\delta})$-DP.
\end{proof}

\subsection{Converting zero-concentrated DP to approximate DP} \zlabel{app:non-uniform-conversion}

It is convenient to analyze the privacy of the interaction $\sinteract(\vec{X}; \analyst, \mech)$ for the clipped 
Gaussian mechanism using $\vec{\rho}$-zCDP, since it provides tighter accounting than $(\epsilon, \vec{\delta})$-DP. 
However, we ultimately need to convert to $(\epsilon, \vec{\delta})$-DP in order to obtain a generalization guarantee 
using \zcref{thm:transfer-dp}. 
\citet{canonne2020discrete} provide a conversion result from $\rho$-zCDP to $(\epsilon, \delta)$-DP when the privacy 
parameters are uniform. 
Here we generalize their result to non-uniform privacy parameters. 

We begin by generalizing Lemma~9 of \citet{canonne2020discrete}. 
This is a technical result that lower bounds $\vec{\delta}(i)$ in terms of an expectation involving the privacy loss 
random variable. 
\begin{lemma} \zlabel{lem:approx-dp-delta-vs-epsilon}
  Let $\epsilon \geq 0$ and $\vec{\delta} \colon \irange{n} \to [0, \infty)$. 
  A randomized mechanism $\mech \colon \scrX^\star \to \scrY$ satisfies $(\epsilon, \vec{\delta})$-DP if and only if 
  \begin{align}
    \vec{\delta}(i) 
      \geq \E_{Z_i \gets \privloss{\mech(\vec{x})}{\mech(\vec{x}')} } \mathopen{} 
        \left(\max \left\{0, 1 - \euler^{\epsilon - Z_i}\right\}\right)
  \end{align}
  for all indices $i \in \irange{n}$ and neighboring datasets $(\vec{x}, \vec{x}') \in \scrN_i$. 
\end{lemma}
\begin{proof}
  Fix $i \in \irange{n}$ and neighboring datasets $(\vec{x}, \vec{x}') \in \scrN_i$. 
  Let $Z_i \gets \privloss{\mech(\vec{x})}{\mech(\vec{x}')}$ and 
  $Z_i' \gets \privloss{\mech(\vec{x}')}{\mech(\vec{x})}$. 
  Our goal is to prove that
  \begin{equation}
    \sup_{E \subseteq \scrY} \Pr(\mech(\vec{x}) \in E) - \euler^{\epsilon} \Pr(\mech(\vec{x}') \in E)
    = \E(\max \{0, 1 - \euler^{\epsilon - Z_i}\}).
  \end{equation}
  
  For any $E \subseteq \scrY$, we have %
  \begin{equation}
    \Pr(\mech(\vec{x}') \in E)
    = \E \mathopen{} \left( \ind{\mech(\vec{x}') \in E} \right)
    = \E \mathopen{} \left( \ind{\mech(\vec{x}) \in E} \euler^{-Z_i} \right).
  \end{equation}
  Thus for all $E \subseteq \scrY$, we have
  \begin{align}
    \Pr(\mech(\vec{x}) \in E) - \euler^{\epsilon} \Pr(\mech(\vec{x}') \in E) 
    &= \E \mathopen{} \left(\ind{\mech(\vec{x}) \in E}\right)
      - \euler^{\epsilon} \E \mathopen{} \left( \ind{\mech(\vec{x}) \in E} \euler^{-Z_i} \right) \\
    &= \E \mathopen{} \left( \ind{\mech(\vec{x}) \in E} (1 - \euler^{\epsilon -Z_i}) \right).
  \end{align}
  The worst event is $E = \{y \in \scrY : 1 - \euler^{\epsilon - Z_i} > 0 \}$ .
  Thus 
  \begin{align}
    \sup_{E \subseteq \scrY} \Pr(\mech(\vec{x}) \in E) - \euler^{\epsilon} \Pr(\mech(\vec{x}') \in E) 
    &= \E \mathopen{} \left( \ind{1 - \euler^{\epsilon - Z_i} > 0} (1 - \euler^{\epsilon - Z_i}) \right) \\
    &= \E(\max \{0, 1 - \euler^{\epsilon - Z_i}\})
  \end{align}
  as required.
\end{proof}

We can then use this lemma to generalize Proposition~12 of \citet{canonne2020discrete}, which converts R\'enyi DP to 
approximate DP. 
This is a step towards our final goal of converting zCDP to approximate DP, since zCDP is equivalent to enforcing  
R\'enyi DP over a range of privacy parameters.
\begin{proposition} \zlabel{prop:rdp-to-approx-dp}
  Let $\mech \colon \scrX^\star \to \scrY$ be a randomized mechanism. 
  Let $\alpha \in (1, \infty)$ and $\epsilon \geq 0$. 
  Suppose $\renyi{\alpha}{\mech(\vec{x})}{\mech(\vec{x}')} \leq \vec{\rho}(i)$ for all indices $i$ and neighboring 
  datasets $(\vec{x}, \vec{x}') \in \scrN_i$, where $\renyi{\alpha}{P}{Q}$ is the R\'enyi divergence of order 
  $\alpha$ of distribution $P$ from distribution $Q$. 
  Then $\mech$ is $(\epsilon, \vec{\delta})$-differentially private for 
  \begin{equation*}
    \vec{\delta}(i) = \frac{\euler^{(\alpha - 1)(\vec{\rho}(i) - \epsilon)}}{\alpha - 1} \left( 1 - \frac{1}{\alpha} \right)^\alpha.
    \label{eqn:delta-rdp-to-approx-dp}
  \end{equation*}
\end{proposition}
\begin{proof}
  Fix index $i \in \irange{n}$, neighboring datasets $(\vec{x}, \vec{x}') \in \scrN_i$ and let 
  $Z_i \gets \privloss{\mech(\vec{x})}{\mech(\vec{x}')}$.
  By assumption we have 
  \begin{equation}
    \E \mathopen{} \left(\euler^{(\alpha - 1) Z_i}\right) = \euler^{ (\alpha - 1) \renyi{\alpha}{\mech(\vec{x})}{\mech(\vec{x}')} }
      \leq \euler^{(\alpha - 1) \vec{\rho}(i)}.
  \end{equation}
  By \zcref{lem:approx-dp-delta-vs-epsilon}, we seek an upper bound on $\E(\max\{0, 1 - \euler^{\epsilon - Z_i}\})$, 
  which we can set to $\vec{\delta}(i)$. 
  Following \citeauthor{canonne2020discrete}, we pick $c > 0$ such that 
  $\max\{0, 1 - \euler^{\epsilon - z}\} \leq c \cdot \euler^{(\alpha - 1) z}$ for all $z \in \reals$. 
  Then 
  \begin{equation}
    \E \mathopen{} \left(\max \{0, 1 - \euler^{\epsilon - Z_i} \}\right)
      \leq \E \mathopen{} \left(c \cdot \euler^{(\alpha - 1) Z_i}\right)
      \leq c \cdot \euler^{(\alpha - 1) \vec{\rho}(i)}.
  \end{equation}
  \citeauthor{canonne2020discrete} show that the smallest value of $c$ that satisfies this condition is  
  $c = \frac{\euler^{\epsilon (1 - \alpha)}}{\alpha - 1} \left( 1 - \frac{1}{\alpha} \right)^\alpha$.
  Thus 
  \begin{equation}
    \E \mathopen{} \left(\max \{0, 1 - \euler^{\epsilon - Z_i}\}\right)
      \leq \frac{\euler^{\epsilon (1 - \alpha)}}{\alpha - 1} \left( 1 - \frac{1}{\alpha} \right)^\alpha 
        \cdot \euler^{(\alpha - 1) \vec{\rho}(i)}
      = \frac{\euler^{(\alpha - 1)(\vec{\rho}(i) - \epsilon)}}{\alpha - 1} \left( 1 - \frac{1}{\alpha} \right)^\alpha
      = \vec{\delta}(i).
  \end{equation}
\end{proof} 

By exploiting the connection between R\'enyi DP and zCDP, we obtain a conversion result from zCDP to approximate DP. 
This is a generalization of Corollary~13 of \citet{canonne2020discrete}.
\begin{corollary} \zlabel{cor:zcdp-approx-dp-conversion}
  Let $\mech \colon \scrX^\star \to \scrY$ be a randomized mechanism that satisfies $\vec{\rho}$-zCDP. 
  Then $\mech$ is $(\epsilon, \vec{\delta})$-DP for any $\epsilon \geq 0$ and $\vec{\delta}: \irange{n} \to [0, \infty)$ 
  such that 
  \begin{equation}
    \vec{\delta}(i) = \frac{\euler^{(\alpha^\star - 1)(\alpha^\star \vec{\rho}(i) - \epsilon)}}
      {\alpha^\star - 1} \left( 1 - \frac{1}{\alpha^\star} \right)^{\alpha^\star},
  \end{equation}
  with $\alpha^\star = \arg \min_{\alpha \in (1, \infty)} 
  \frac{\euler^{(\alpha - 1)(\alpha \sup_{i} \vec{\rho}(i) - \epsilon)}}{\alpha - 1} \left( 1 - \frac{1}{\alpha} \right)^\alpha.$
\end{corollary}
\begin{proof}
  Fix $i \in \irange{n}$ and $(\vec{x}, \vec{x}') \in \scrN_i$. 
  Since $\mech$ satisfies $\vec{\rho}$-zCDP, we have 
  $\renyi{\alpha}{\mech(\vec{x})}{\mech(\vec{x}')} \leq \alpha \vec{\rho}(i)$ for all $\alpha \in (1, \infty)$. 
  \zcref{prop:rdp-to-approx-dp} (with $\vec{\rho}(i) \gets \alpha \vec{\rho}(i)$) provides a conversion result 
  to $(\epsilon, \vec{\delta})$-DP for any choice of $\alpha$. 
  We choose $\alpha$ to minimize the worst case $\vec{\delta}(i)$, given by:
  \begin{equation*}
    \sup_{i} \vec{\delta}(i) 
    = \frac{\euler^{(\alpha - 1)(\alpha \sup_i \vec{\rho}(i) - \epsilon)}}{\alpha - 1} \left( 1 - \frac{1}{\alpha} \right)^{\alpha}
    = \exp g(\alpha)
  \end{equation*}
  with $g(\alpha) = (\alpha - 1)(\alpha \sup_i \vec{\rho}(i) - \epsilon) + \alpha \log (1 - 1 / \alpha) - \log (\alpha - 1)$.
  A unique minimizer $\alpha^\star$ exists since $g(\alpha)$ is a smooth convex function.
\end{proof}

We note that the optimizer $\alpha^\star$ can be found efficiently using binary search as described by 
\citet{canonne2020discrete}.

\section{Additional Empirical Results} \zlabel{app:experiments}

In this appendix, we provide additional empirical results to complement those presented in 
\zcref{sec:application-empirical}.
In \zcref{fig:K-vs-E-alpha-0.1-fixed-growth-ratio} of the main paper, we plotted a lower bound on the number of 
adaptive statistical queries $k$ that can be answered as a function of the final dataset size $n$. 
There we varied the initial size of the dataset $n_0$ to ensure a fixed growth ratio $n / n_0 = 3$. 
In \zcref{fig:k-vs-n-alpha-0.1-fixed-n0}, we produce a similar plot where we fix $n_0 = 500\,000$ and vary the 
growth ratio $n / n_0$ instead. 
Here again, we observe that our bounds (\textbf{Ours-U} and \textbf{Ours-N}) outperform the others in the non-asymptotic 
regime plotted, with the performance gap becoming more pronounced for larger values of $b$.  

\zcref{fig:alpha-vs-n-3-growth-ratio} covers the same setting as \zcref{fig:K-vs-E-alpha-0.1-fixed-growth-ratio} 
in the main paper, except that it plots the confidence width $\alpha$ on the vertical axis for a fixed number of 
queries $k = 10\,000$. 
A smaller confidence width is better, and we see that the relative rankings of the bounds is the same as in 
\zcref{fig:K-vs-E-alpha-0.1-fixed-growth-ratio}. 
Notably, the behavior of our bounds (\textbf{Ours-U} and \textbf{Ours-N}) is stable for all values of $b > 1$, 
whereas the confidence width degrades for the static-based bounds (\textbf{JLNRSS} and \textbf{JLNRSS+}) as 
$b$ increases. 

\zcref{fig:k-vs-b-alpha-0.1-fixed-n} examines the impact of query batching for a fixed final dataset size 
$n = 1\,500\,000$ and fixed initial dataset size $n_0 = 500\,000$.
It shows that both of our bounds (\textbf{Ours-U} and \textbf{Ours-N}) improve as the number of batches $b$ increases, 
reaching a saturation point at around $b = 40$. 
This suggests it is better from a generalization perspective to ask queries more frequently in smaller batches when 
using our bounds.
In contrast, the static-based bounds (\textbf{JLNRSS} and \textbf{JLNRSS+}) degrade as the number of batches $b$ 
increases. 

\begin{figure}
  \centering
  \includegraphics[width=0.95\textwidth]{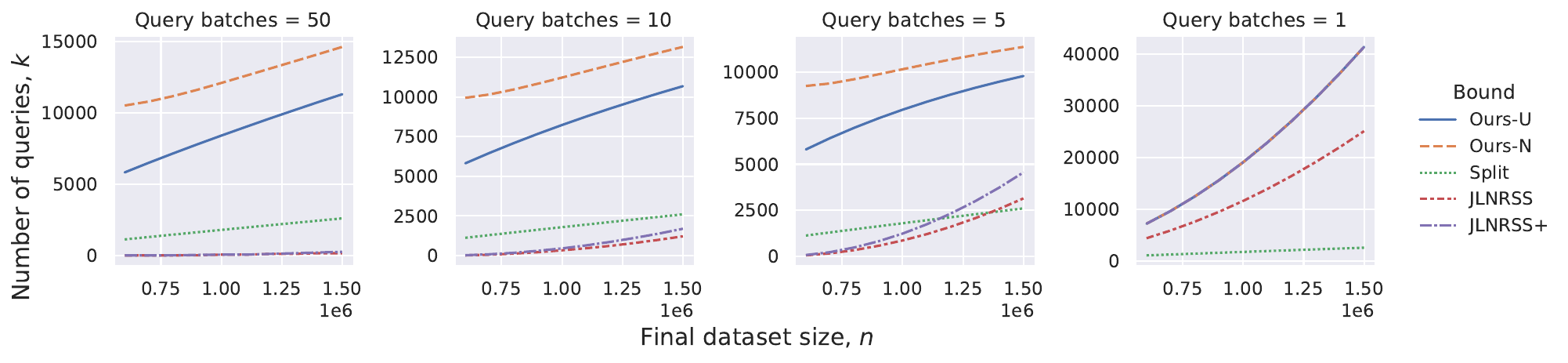}
  \caption{Comparison of the number of adaptive statistical queries that can be answered with error tolerance 
    $\alpha = 0.1$ and uniform coverage probability $1 - \beta = 0.95$ using a growing dataset with fixed
    initial size $n_0 = 500\,000$ in a batched query setting. 
    The number of queries (vertical axis) is plotted as a function of the final dataset size $n$ (horizontal axis), 
    bound (curve style) and the number of query batches $b$ (horizontal panel). 
    The right-most panel with $b = 1$ corresponds to the static data setting.}
  \zlabel{fig:k-vs-n-alpha-0.1-fixed-n0}
\end{figure}

\begin{figure}
  \centering
  \includegraphics[width=0.95\textwidth]{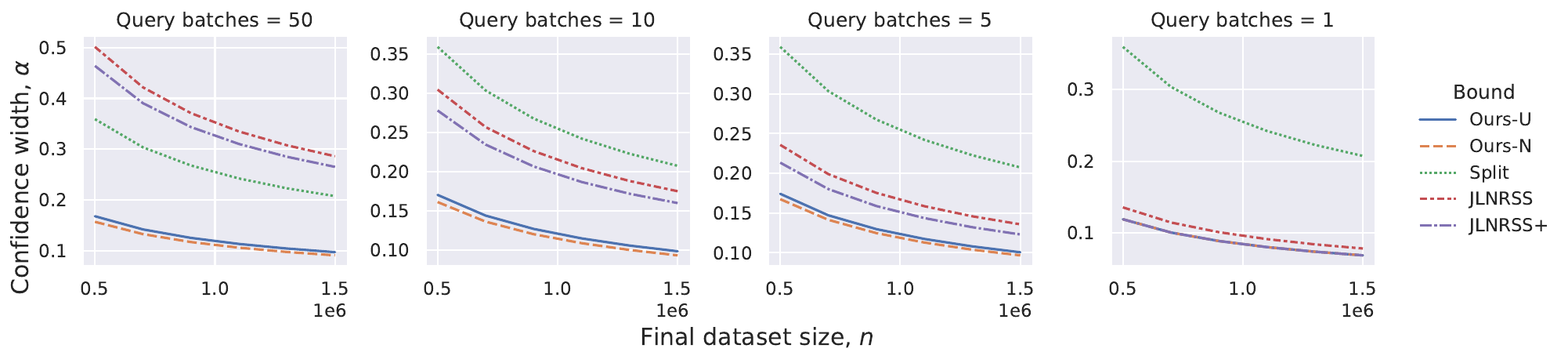}
  \caption{Comparison of the error tolerance $\alpha$ (vertical axis) for $k = 10\,000$ adaptive statistical queries 
    with uniform coverage probability $1 - \beta = 0.95$ using different ADA bounds (curve style). 
    The queries are answered using a growing dataset with final size $n$ (horizontal axis) and growth ratio 
    $n / n_0 = 3$, and are grouped into $b$ batches (horizontal panel). 
    The right-most panel with $b = 1$ corresponds to the static data setting.}
  \zlabel{fig:alpha-vs-n-3-growth-ratio}
\end{figure}

\begin{figure}
  \centering
  \includegraphics[width=0.7\linewidth]{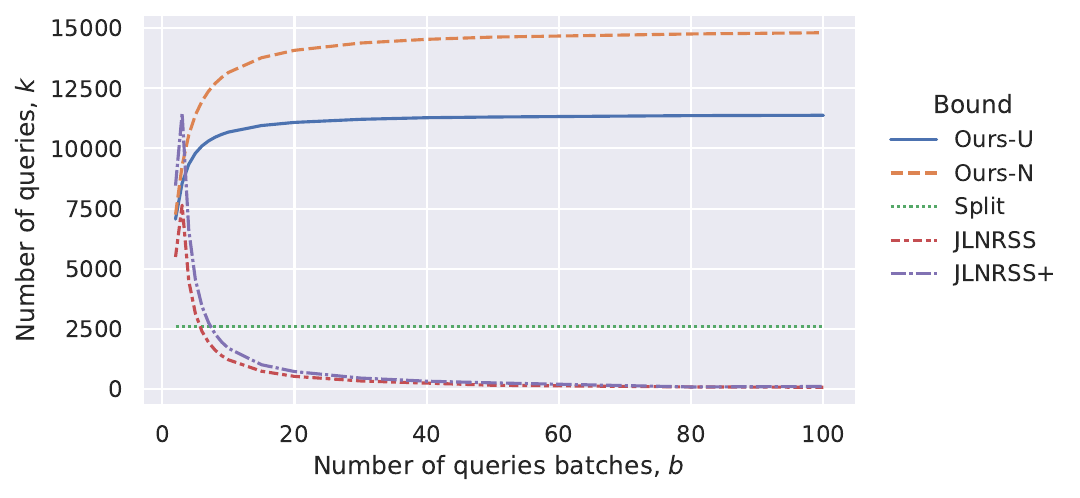}
  \caption{Comparison of the number of adaptive statistical queries that can be answered with error tolerance 
    $\alpha = 0.1$ and uniform coverage probability $1 - \beta = 0.95$ using a growing dataset with 
    initial size $n_0 = 500\,000$ and final size $n = 1\,500\,000$ in a batched query setting. 
    The number of queries (vertical axis) is plotted as a function of the number of query batches $b$ (horizontal 
    axis) and bound (curve style).}
  \zlabel{fig:k-vs-b-alpha-0.1-fixed-n}
\end{figure}

\end{document}